\documentclass{article}
\usepackage{amsthm}
\usepackage{amsmath}

\usepackage{microtype}
\usepackage{graphicx}
\usepackage{subcaption}
\usepackage{booktabs} %
\usepackage{amssymb}
\usepackage[most]{tcolorbox}

\usepackage{xcolor}
\usepackage{hyperref}
\usepackage[accepted]{icml2026}
\usepackage{algorithm}
\usepackage{multirow}
\usepackage{makecell}
\usepackage{arydshln}
\usepackage{algorithmic}

\usepackage{xcolor}

\usepackage[capitalize,noabbrev]{cleveref}
\usepackage{stfloats}
\usepackage{placeins}
\theoremstyle{plain}
\newtheorem{theorem}{Theorem}[section]
\newtheorem{proposition}[theorem]{Proposition}

\theoremstyle{definition}
\theoremstyle{remark}

\usepackage{tikz}
\usetikzlibrary{shapes.geometric, arrows.meta, positioning, calc, shadows, backgrounds, fit}

\newtcolorbox{observationbox}{
  colback=blue!5,
  colframe=blue!75,
  boxrule=1.0pt,
  arc=2pt,
  left=6pt,
  right=6pt,
  top=1pt,
  bottom=1pt
}
\usepackage{listings}
\usepackage{color}

\definecolor{codegreen}{rgb}{0,0.6,0}
\definecolor{codegray}{rgb}{0.5,0.5,0.5}
\definecolor{codepurple}{rgb}{0.58,0,0.82}
\definecolor{backcolour}{rgb}{0.95,0.95,0.92}

\lstdefinestyle{pytorch}{
    backgroundcolor=\color{backcolour},   
    commentstyle=\color{codegreen},
    keywordstyle=\color{magenta},
    numberstyle=\tiny\color{codegray},
    stringstyle=\color{codepurple},
    basicstyle=\ttfamily\footnotesize,
    breakatwhitespace=false,         
    breaklines=true,                 
    captionpos=b,                    
    keepspaces=true,                 
    numbers=left,                    
    numbersep=5pt,                  
    showspaces=false,                
    showstringspaces=false,
    showtabs=false,                  
    tabsize=2,
    frame=single,
    language=Python
}
\usepackage[textsize=tiny]{todonotes}
\usepackage{multicol}
\begin{document}

\twocolumn[
  \icmltitle{Improving Sampling for Masked Diffusion Models via Information Gain}
  \icmlsetsymbol{equal}{*}
  \begin{icmlauthorlist}
    \icmlauthor{Kaisen Yang}{yks}
    \icmlauthor{Jayden Teoh}{jayden}
    \icmlauthor{Kaicheng Yang}{ykc}
    \icmlauthor{Yitong Zhang}{zyt}
    \icmlauthor{Alex Lamb}{Alex Lamb}
    \icmlaffiliation{yks}{Department of Computer Science and Technology, Tsinghua University}
    \icmlaffiliation{jayden}{Massachusetts Institute of Technology}
    \icmlaffiliation{ykc}{Shanghai Jiao Tong University}
    \icmlaffiliation{zyt}{Beihang University}
    \icmlaffiliation{Alex Lamb}{College of AI, Tsinghua University}
  \end{icmlauthorlist}
  \icmlcorrespondingauthor{Alex Lamb}{alex6200@gmail.com}
  \icmlkeywords{Machine Learning, ICML}
  \vskip 0.3in
]

\printAffiliationsAndNotice{} 
\begin{abstract}
Masked Diffusion Models (MDMs) enable flexible decoding orders, yet existing samplers remain largely greedy, selecting locally certain tokens without accounting for their downstream effects. We show that this myopia can increase cumulative uncertainty and lead to suboptimal generation. To address this, we propose the \textbf{Info-Gain Sampler}, a training-free decoding method that uses the bidirectional structure of MDMs to balance immediate uncertainty with the information gained over remaining masked positions. Across reasoning, coding, creative writing, and image generation tasks, Info-Gain Sampler consistently outperforms existing MDM samplers, improving average reasoning accuracy by 2.9--11.6 percentage points and achieving a 62.8\% average win rate in creative writing. The code is available at \url{https://github.com/yks23/Information-Gain-Sampler}.
\end{abstract}

\section{Introduction}

Masked Diffusion Models (MDMs) have emerged as a powerful alternative to the dominant autoregressive paradigm for discrete sequence generation~\cite{austin2021structured, lou2024discrete,nie2025large}. By leveraging bidirectional attention, MDMs break free from strict left-to-right generation, granting unprecedented flexibility in decoding paths~\cite{rombach2022high}. This flexibility unlocks superior performance in tasks requiring bidirectional attention, such as code infilling, biological sequence design, and long-horizon planning tasks~\cite{ye2025dream, gong2024scaling,ye2024beyond,wei2026team}.

However, this potential remains largely untapped due to a training--inference mismatch. While MDMs are trained under random masking patterns, inference entails a multi-step, order-sensitive decoding process. Navigating the large space of possible decoding orders therefore requires a sampler that carefully selects which tokens to reveal next. Consequently, generation quality is heavily dependent on the effectiveness of the sampler~\cite{kim2025train}.

Existing samplers predominantly rely on \textbf{local certainty heuristics} such as confidence to greedily select the next decoding target~\cite{chang2022maskgit, ye2025dream, huang2025pc, kim2025train}. In Section~\ref{sec:motivation}, we argue and demonstrate that such samplers are often nonrobust due to the myopia of local heuristics: they ignore the long-term impact of current decoding decisions on future uncertainty. Consequently, they frequently prioritize tokens that appear syntactically \textit{confident} but are semantically suboptimal, leading to error propagation and compromised generation quality.

In contrast to autoregressive models (ARMs), where the causal nature makes evaluating the downstream effect of a token choice computationally prohibitive, the bidirectional nature of MDMs offers a distinct advantage: it enables us to assess how a token decoding decision influences the uncertainty across all remaining masked positions immediately.

Leveraging these insights, we propose the \textbf{Information Gain (Info-Gain) Sampler}, a decoding framework that departs from greedy certainty-based sampling. Instead of merely refining the certainty scoring function, the Info-Gain Sampler additionally evaluates decoding actions by how much they reduce the uncertainty in remaining masked tokens. By balancing \textit{immediate certainty} with \textit{information gain}, our method prioritizes globally informative decisions and yields more robust decoding trajectories. Our contributions are threefold:

\begin{figure*}[t]
    \centering
    \begin{minipage}[t]{0.49\textwidth}
        \centering
        \includegraphics[width=\linewidth]{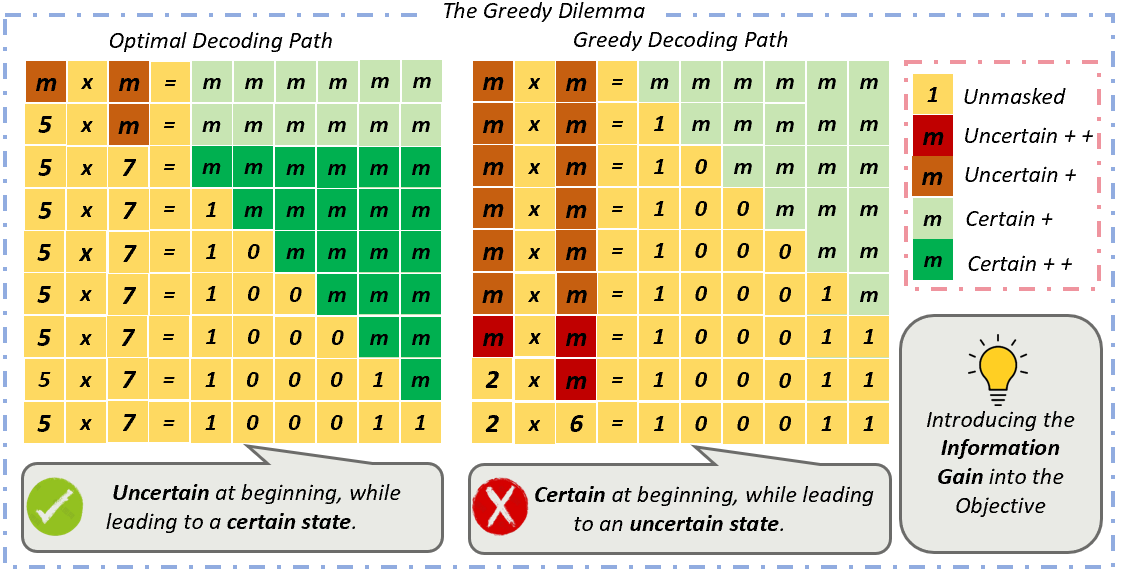}
        \caption*{(a) The dilemma of greedy certainty-based sampling.}
        \label{fig:motivation}
    \end{minipage}
    \hfill
    \begin{minipage}[t]{0.49\textwidth}
        \centering
        \includegraphics[width=\linewidth]{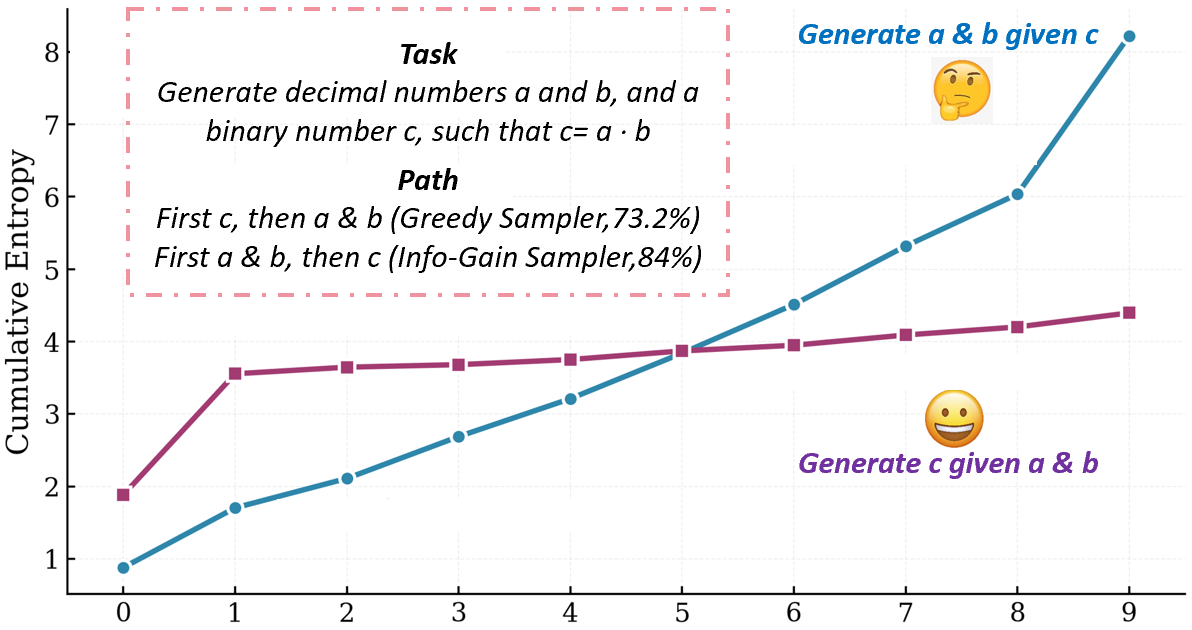}
        \caption*{(b) Evolution of cumulative uncertainty.}
        \label{fig:toy_one_way}
    \end{minipage}

    \caption{
    \textbf{Motivation:} Analysis of decoding strategies on the one-way multiplication experiment.
    (a) Illustrates the contrast between the suboptimal path chosen by the greedy certainty-based sampler and the optimal path, motivating the introduction of the Info-Gain Sampler.
    (b) Shows the evolution of cumulative uncertainty throughout the decoding process.
    While the greedy sampler prioritizes decoding $c$ first (73.2\%) due to its immediate high confidence, it leads to failure because of the task's one-way nature.
    In contrast, the Info-Gain Sampler optimizes global uncertainty by resolving high-entropy factors $a$ or $b$ first (84.0\%), ensuring a successful decoding trajectory.
    }
    \label{fig:toy_analysis}
\end{figure*}

{(1) }We empirically identify the fundamental limitations of existing greedy certainty-based samplers in MDMs through failure case analyses.

{(2) }We introduce the Info-Gain Sampler, which balances immediate costs with future information gain via a simple yet effective objective. We also propose computationally efficient implementations of Info-Gain Sampler.

{(3) }We extensively evaluate Info-Gain Sampler against other sampler baselines across diverse pretrained MDMs and benchmarks. The results show that Info-Gain consistently outperforms these baselines across math, coding, planning, writing, and image generation tasks.

\section{Preliminary}

\subsection{Masked Diffusion Models (MDMs)}
We consider discrete data over a vocabulary $\mathcal{V}=\{1,\ldots,V\}$ with sequence length $L$, and let $0$ represent the special mask token. A state is denoted as $z_t \in (\mathcal{V}\cup\{0\})^L$ for discrete time steps $t = 0,1,\ldots,T$, where $z_t^\ell$ is the token at position $\ell$. The data distribution $p_{\text{data}}$ is defined over fully unmasked sequences in $\mathcal{V}^L$, corresponding to $z_0$.

\textbf{Forward process.}
The forward process gradually corrupts a clean data point $z_0 \sim p_{\text{data}}$ by independently masking each coordinate over $T$ steps. At each step, tokens are progressively replaced by the mask token $0$ according to a fixed schedule, such that at time $T$, the state is fully masked, i.e., $z_T = (0,\ldots,0)$.

\textbf{Reverse process and training.}
To generate samples, we learn to reverse this forward process. A denoising network $p_\theta^\ell(\cdot|z_t)$ predicts, for each masked position $\ell$, the distribution of the original token $z_0^\ell$. The model is trained by minimizing the evidence lower bound, which reduces to a weighted cross-entropy loss over masked positions. As revealed in \cite{kim2025train}, this loss weights all possible infilling problems equally, meaning the optimal $p_\theta$ learns to predict any masked token given any context.

\textbf{Sampling.}
Sampling starts from the fully masked state $z_T = (0,\ldots,0)$. At each step $t$ (going backwards from $T$ to $1$), given current state $z_t$, the model computes distributions $\{p_\theta^\ell(\cdot|z_t)\}_{\ell\in\mathcal{M}_t}$ for all masked positions $\mathcal{M}_t:=\{\ell\mid z_t^\ell=0\}$ in a single forward pass.

A \textit{sampler} $\pi$ determines how to use these distributions to produce the next state $z_{t-1}$. Unlike autoregressive models where the sampler decides the next token at a fixed position, a sampler for MDMs decides which masked positions to fill and what tokens to assign. Specifically, it selects a subset $A_t\subseteq\mathcal{M}_t$ to unmask and, for each $\ell\in A_t$, assigns a token $\hat{x}^\ell\sim p_\theta^\ell(\cdot|z_t)$ (optionally with temperature scaling or top-$k$ filtering). This yields an action $a_t:=\{(\ell,\hat{x}^\ell)\mid\ell\in A_t\}$. Applying $a_t$ to $z_t$ produces $z_{t-1}=\textsc{Apply}(z_t,a_t)$, where each selected position is filled and all others remain unchanged. This process repeats until reaching $z_0$, yielding a fully generated sequence $z_0\in\mathcal{V}^L$.

\noindent\textbf{Certainty-based samplers.}
A widely used family of samplers follows a two-stage procedure at each step. First, \textit{token sampling}: for each masked position $\ell\in\mathcal{M}_t$, draw a candidate token $\hat{x}^\ell\sim p_\theta^\ell(\cdot|z_t)$. Second, \textit{position selection}: select which positions to actually unmask using a \textit{certainty score} $\phi(\ell,z_t)$ that measures the model's confidence at position $\ell$. Common choices for $\phi$ include the top-1 probability of the predictive distribution (confidence), the negative entropy of the predictive distribution, or the margin between the top-2 probabilities \cite{chang2022maskgit, ye2025dream, kim2025train}. A formal description of mainstream certainty-based samplers is provided in Appendix~\ref{appendix:baselines}.
Given a time-dependent budget $K_t$ (the number of positions to unmask at step $t$, determined by a predefined scheduling function), the sampler selects the subset $A_t^*$ that maximizes total certainty:
\begin{equation}
A_t^* = \underset{A_t\subseteq\mathcal{M}_t, |A_t|=K_t}{\mathrm{argmax}} \sum_{\ell\in A_t} \phi(\ell,z_t).
\end{equation}
The final action is then $a_t^* = \{(\ell,\hat{x}^\ell)\mid\ell\in A_t^*\}$. This "most certain first" strategy fills positions where the model is most confident, leaving harder decisions for later steps when more context is available.

\vspace{-3mm}
\section{Method}
% \noindent\textbf{Entropy and Mutual Information.}
% For a random variable $X$, entropy is $H(X\mid c) = -\sum_x p(x\mid c) \log p(x)$. 
% For multiple variables, joint entropy satisfies:
% \begin{equation}
% H(X_1,\dots,X_n) \le \sum_{i=1}^n H(X_i).
% \end{equation}
% We use $H^{(\ell)}(z_t)$ to denote entropy of the predicted categorical distribution of $\ell$-th token by model under state $z_t$.

% The mutual information between variables $\{X_i\}$ and $Y$ is bounded by:
% \begin{equation}
% I(\{X_i\}; Y) \le \min\left\{H(Y), \ \sum_{i=1}^n H(X_i)\right\}.
% \end{equation}

\subsection{Motivation}
\label{sec:motivation}
Existing certainty-based sampling methods typically employ a greedy strategy. These methods aim to minimize error accumulation by prioritizing the most certain positions. While this approach effectively reduces immediate uncertainty and enhances short-term reliability, it essentially performs a greedy optimization of the cumulative uncertainty. This leads us to a fundamental question:

\begin{observationbox}
\textbf{Question 1:}
Is greedy optimization sufficient for minimizing cumulative uncertainty across steps?
\end{observationbox}

To quantify the uncertainty throughout a generation process $\tau =z_T\rightarrow z_{T-1}\rightarrow\ldots\rightarrow z_0$ , we first introduce 
\textbf{Cumulative Entropy} $\tilde{H}$ over $\tau$ as a key metric, defined as:
\begin{equation}
    \tilde{H}(\tau):= \sum_{t=T}^{1} C(a_t \mid z_t),
\end{equation}
where $C(a_t\mid z_t) = \sum_{\ell \in A_t} H^{(\ell)}(z_t)$ represents the sum of marginal entropy for the tokens selected at step $t$, given by the model's output distribution. This metric quantifies the total uncertainty accumulated throughout the decoding trajectory.

To explore this question, we present two case studies that illustrate the limitations of greedy samplers.

\noindent \textbf{Case Study 1: One-way Multiplication.}  
The model is tasked with generating an equation \(a \times b = c\), where \(a\) and \(b\) are decimal factors and \(c\) is a binary product. This task is {inherently one-way}: computing a product from factors is straightforward, while factoring is not only computationally difficult but also particularly challenging for MDMs to generate, especially when the product is represented in binary format. Two decoding paths emerge: (i) \textit{Product-first}, and (ii) \textit{Factor-first}. 

A greedy sampler mistakenly favors path (i) because the binary digits of \(c\) exhibit lower per-token uncertainty (requiring a choice only from \(\{0,1\}\)) than the decimal digits of \(a\) and \(b\) (which have 10 possible values). As shown in Figure~\ref{fig:toy_analysis}, by minimizing immediate uncertainty, the greedy strategy commits to a product without fixing the factors, leading to incorrect equations and high residual uncertainty. Conversely, an optimal strategy would resolve the higher-uncertainty factors \(a\) and \(b\) first; once fixed, \(c\) can be determined with nearly zero uncertainty. Empirically, the greedy certainty-based sampler (represented by Entropy~\cite{ye2025dream}) prioritizes path (i) with 73.2\% probability, leading to significantly higher cumulative uncertainty.

\noindent \textbf{Case Study 2: Binary Judgment.}
In this experiment, the model is tasked with judging the truth of an arithmetic statement using the template: ``[reasoning-masks] The answer is (Yes/No): [answer-mask]''. The answer token typically exhibits lower local uncertainty as it is constrained to a binary choice (Yes/No), whereas the reasoning steps involve much higher uncertainty. Consequently, greedy samplers tend to decode the answer token prematurely, making a commitment before the underlying reasoning is resolved. This leads to incorrect judgments and leaves high residual uncertainty in the reasoning positions. 

To analyze this, we compare greedy certainty-based samplers against an auto-regressive (AR) baseline, which naturally resolves high-uncertainty reasoning before the binary answer. As shown in Table~\ref{tab:toy_experiment_2}, the AR baseline achieves superior accuracy and lower cumulative uncertainty, highlighting that greedy MDM samplers fail by prematurely committing to low-uncertainty answer tokens before the reasoning is established. 
\begin{table}[h]
\centering
\caption{Quantitative results for Case Study 2.}
\label{tab:toy_experiment_2}
\small
\setlength{\tabcolsep}{3pt}
\begin{tabular}{@{}lccccc@{}}
\toprule
\textbf{Metric} & {Entropy} & {Confidence} & {Margin} & {AR} \\ \midrule
$\tilde{H}$ $\downarrow$ & 32.75 & 31.68 & 35.60 & \textbf{25.19} \\
Acc. (\%) $\uparrow$ & 67 & 73 & 66 & \textbf{90} \\ \bottomrule
\end{tabular}
\end{table}

\begin{observationbox}
\textbf{Observation 1:}\label{obs:obs1}
Existing greedy certainty-based samplers often fail to find near-optimal decoding paths.
\end{observationbox}

As illustrated in Figure~\ref{fig:toy_analysis}, an optimal decoding action should be evaluated not only by its own prediction certainty but also by the \textit{information gain} it provides for the remainder of the generation process. To address the limitations of existing samplers, it is essential to account for this information gain when making current decisions. As shown in Figure~\ref{fig:toy_analysis}, our proposed Info-Gain Sampler effectively addresses this one-way challenge by prioritizing the decoding of factors with an 84\% probability.

In standard ARMs, assessing information gain typically requires computationally expensive techniques, such as Monte Carlo Tree Search, to simulate future trajectories. This is primarily due to the next-token bottleneck: since ARMs only provide the probability of the immediate next token, multi-step look-ahead becomes prohibitively slow.

\begin{observationbox}
\textbf{Question 2:}
Can Masked Diffusion Models efficiently assess information gain of a decoding action?
\end{observationbox}

Unlike ARMs, MDMs do not have the next-token bottleneck. MDMs leverage bidirectional attention, allowing the model to simultaneously evaluate the impact of any decoding action on the uncertainty of the entire sequence. This architectural advantage enables the model to “see” how filling a mask affects the uncertainty of all remaining masks in a single forward pass.

\begin{observationbox}
\textbf{Observation 2:}\label{obs:obs2} MDMs' bidirectional architecture enables efficient information gain estimation in one forward pass, bypassing expensive iterative computations.
\end{observationbox}

\subsection{Information Gain Sampler}  

\label{sec:method_info_gain}
The insights from \textbf{Observation 1} and \textbf{Observation 2} suggest that greedy optimization alone is insufficient for minimizing cumulative uncertainty across the entire sequence. 

We introduce the \textbf{Information Gain (Info-Gain) Sampler} which leverages the bidirectional nature of MDMs to balance the immediate uncertainty cost of a decoding decision against its expected information gain over the remaining masked positions.

\subsubsection{Objective of Info-Gain Sampler}
We first define \textbf{state uncertainty} as the average marginal entropy over the masked positions in state $z_t$:
\begin{equation}
\mathcal{H}(z_t)=\frac{1}{|\mathcal{M}_{t}|} \sum_{\ell \in \mathcal{M}_{t}} H^{(\ell)}(z_t)
\end{equation}
The state uncertainty quantifies the information remaining to be resolved by the model and can be computed efficiently via a single forward pass. 
%For a fully decoded sequence, the state uncertainty is defined as zero.

The \textbf{information gain} of action $a_t$ is defined as the reduction in state uncertainty (equivalently, the decrease in marginal entropy over the remaining masked positions) it induces:
\begin{equation}
    \text{IG}(a_t ; z_t) := \mathcal{H}(z_t) - \mathcal{H}(z_{t-1})
\end{equation}
where $z_{t-1} = \text{Apply}(z_t, a_t)$ denotes the state obtained after executing action $a_t$ from state $z_t$.

The total impact of a decoding action $a_t$ is thus decomposed into two components:

\textbf{(1) Immediate Cost:} the uncertainty of the tokens being decoded in the current step, measured by the sum of marginal entropy over the chosen positions $C(a_t \mid z_t)$.

\textbf{(2) Information Gain:} the reduction in the uncertainty over the remaining mask positions, quantified by the information gain $\text{IG}(a_t; z_t)$.

To balance these two components, we define the Info-Gain Sampler objective as:
\begin{equation}
J_{\mathrm{IG}}(a_t \mid z_t) = \underbrace{\text{IG}(a_t ;z_t)}_{\text{Information Gain}}-\underbrace{C(a_t \mid z_t)}_{\text{Immediate Cost}},
\label{eq:igp_objective}
\end{equation}

We provide further theoretical analysis in Appendix~\ref{appendix:mi_upper_bound}.

\subsubsection{Implementation of Info-Gain Sampler}
\begin{figure}[t]
    \centering
    \includegraphics[width=1\linewidth]{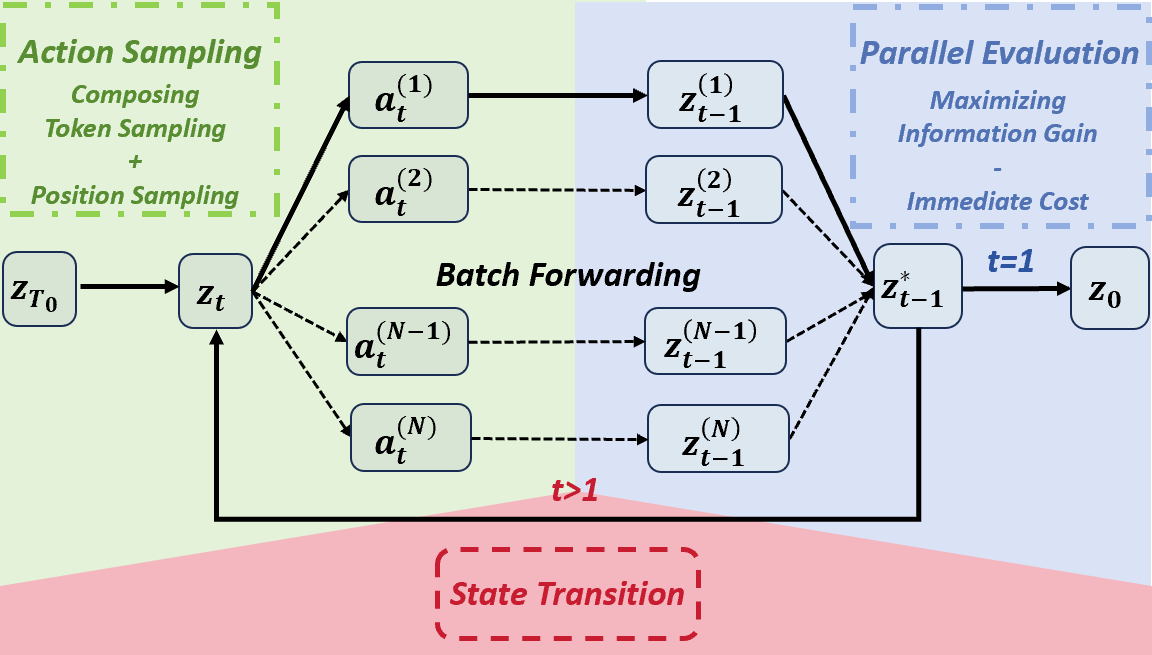}
    \caption{The Info-Gain Sampler workflow. Starting from state $z_{T_0}$, the sampler iteratively: (1) samples candidate actions, (2) evaluates $J_{\text{IG}} = \text{Information Gain} - \text{Immediate Cost}$ in parallel to select the optimal successor state $z_{t-1}^*$, and (3) executes the state transition until reaching the final sequence $z_0$.}
    \label{igp_workflow}
\end{figure}

\noindent\textbf{Action Sampler.} Following previous work~\cite{opendllm2025,yang2025mmada}, we explore the large action space by generating a candidate set $\mathcal{C}$ of size $N$ through a two-stage sampling process: (1) \textit{Token Sampling}: drawing tokens $v_\ell$ from $p_\theta$ with token temperature $\tau_{\text{token}}$, and (2) \textit{Position Sampling}: selecting positions $\ell \in \mathcal{M}_t$ using a softmax over certainty scores $\phi(\ell,z_t)$ with position temperature $\tau_{\text{pos}}$. Each candidate action $a_t = \{(\ell, v_\ell)\}$ is formed by pairing these samples, providing a diverse and high-quality set for evaluation. The size of $a_t$ is determined by a step scheduling function.

At each decoding step, Info-Gain Sampler follows a three-step cycle to determine and execute the most informative action (Figure~\ref{igp_workflow}):

\textbf{(1) Sampling:} We sample a candidate set $\mathcal{C} = \{a_t^{(1)}, \dots, a_t^{(N)}\}$ of diverse actions using the \textit{Action Sampler}. This step explores the combinatorially large action space by proposing multiple potential actions.

\textbf{(2) Evaluation:} We compute the objective $J_{\mathrm{IG}}(a_t\mid z_t)$ for all candidates $a_t$ in the set $\mathcal{C}$. Crucially, as noted in the previous section, this evaluation is highly efficient as it requires only a single batched forward pass to estimate the future information gain for all candidates simultaneously.

\textbf{(3) Transition:} The optimal action is selected as $a_t^* = \arg\max_{a\in \mathcal{C}} J_{\mathrm{IG}}(a \mid z_t)$. We then execute this action to transition to the next state $z_{t-1}^*$, repeating the cycle until all masked positions are filled and a complete sequence is generated.

\subsubsection{Efficient Implementation of Info-Gain Sampler}
\label{sec:acceleration}
To ensure efficiency, candidate evaluations are performed in parallel within a single batched forward pass. We further optimize the sampler by restricting the information-gain computation to the current active block $\mathcal{B}$~\citep{arriola2025block}. Following the block-wise approximate KV-cache strategy of Fast-dLLM~\citep{wu2025fast}, tokens outside the active block are unchanged during candidate evaluation and can therefore be reused through a shared dual cache: the prefix cache stores tokens before $\mathcal{B}$, while the suffix cache stores the unchanged masked context after $\mathcal{B}$. Additionally, we implement a high-confidence bypass: if the maximum token probability exceeds a threshold $\gamma$, the corresponding positions are directly fixed into the action set. If the number of such high-confidence positions exceeds the predefined size of the action set for the current step, only the top-$k$ positions with the highest confidence are selected for decoding. This hybrid approach, inspired by \citet{wu2025fast}, significantly reduces inference latency while preserving planning quality. Because Info-Gain effectively reduces uncertainty during decoding, the high-confidence bypass is triggered more frequently, making the mechanism exceptionally efficient.

\subsubsection{Why is Info-Gain effective}

\textbf{State uncertainty} can be viewed as a proxy for whether the current decoding state lies close to the \textbf{training data manifold}, where logically coherent and fluent text is more likely to reside: low uncertainty reflects a concentrated predictive distribution, while high uncertainty signals potential deviation into poorly conditioned regions associated with inconsistent or unnatural expressions. Unlike greedy certainty-based samplers that cannot recognize such deviation signals because locally certain actions may still increase future uncertainty, the \textbf{Info-Gain Sampler} explicitly uses state uncertainty through its information-gain term: actions that increase uncertainty negatively affect the information-gain objective, thereby discouraging their selection. This mechanism helps the Info-Gain Sampler maintain logical coherence and fluency throughout decoding, even under high sampling temperatures.

\begin{table*}[!ht]
\centering
\caption{Performance on the full-attention MDM (\textbf{Dream-7B-Instruct}). Results are reported with decoding rates $K \in \{1, 2\}$, where reasoning tasks(GSM8K, MATH500, HumanEval, MBPP) use a block size of 16 and planning tasks (Sudoku, Countdown) are decoded globally. We report the accuracy for each task, along with the average accuracy (Avg.) and cumulative entropy ($\tilde{H}$) per generation.}
\label{tab:dream_results}
\resizebox{\textwidth}{!}{
\begin{tabular}{llcccccccc}
\toprule
\textbf{K} & \textbf{Sampler} & \textbf{GSM8K} & \textbf{MATH500} & \textbf{HumanEval} & \textbf{MBPP} & \textbf{Sudoku} & \textbf{Countdown} &\textbf{Avg.}& \textbf{$\tilde{H}\downarrow$} \\
\midrule
\multirow{9}{*}{\makecell[l]{$2$}} 
  & Uniform & 18.7 & 14.3 & 14.3 & 18.6 & 52.8 & 7.6& 21.1 & 389.2 \\
  & AR & 53.8 & 23.0 & 23.7 & 22.2 & 38.2 & 25.8 & 31.1 & 273.1 \\
  & Entropy & 55.8 & 27.0 & 26.2 & 23.2 & 76.0 & 23.2& 38.6 & 247.8 \\
& Confidence & 61.9 & 29.4 & 26.8 & 25.2 & 81.6& 36.2& 43.5 & 249.2 \\
& Margin & 65.5 & 28.8 & 28.8 & 23.6 & 74.4& 28.9& 41.7 & 287.5 \\
& KLASS & 67.3 & 30.4 & 31.9 & 30.1 & 82.1& 35.4& 46.2 & 239.3 \\
& PC-Sampler& 72.8 & 32.3 & 36.4 & 34.4 & 80.2& 42.4& 49.8 & 158.3 \\
& LookUM& 75.2 & \textbf{35.2} & 38.4 & 36.1 & 77.4 & 39.2 & 50.3 & 134.9 \\
& \textbf{Info-Gain} & \textbf{77.7} & 34.2 & \textbf{42.9} & \textbf{39.4} & \textbf{82.2} & \textbf{44.1} & \textbf{53.4} & \textbf{104.3} \\
\hdashline
\multirow{9}{*}{\makecell[l]{$1$}} 
  & Uniform & 30.4 & 15.8 & 16.7 & 30.8 & 60.4 & 8.8& 27.2 & 199.3 \\
  & AR & 76.5 & 43.8 & 41.9 & 35.6 & 61.8 & 35.5 & 49.2 & 121.7  \\
  & Entropy & 75.7 & 47.0 & 46.4 & 45.0 & 78.8 & 33.6& 54.4 & 95.6 \\
& Confidence & 78.8 & 46.6 & 49.4 & 39.8 & 81.5 & 39.2& 55.9 & 99.5 \\
& Margin & 77.5 & 46.5 & 41.5 & 40.4 & 80.3 & 35.2& 53.6 & 107.3 \\
& KLASS & 78.2 & 47.2 & 34.2 & 40.6 & 79.9 & 34.3& 52.4 & 99.6 \\
& PC-Sampler& 81.5 & 46.8 & 54.4 & 46.2 & 83.6 & 41.8& 59.1 & 78.4 \\
& LookUM & 78.3 & \textbf{51.4} & 52.3 & 45.9 & 78.4 & 43.3 & 58.2 & 61.4 \\
& \textbf{Info-Gain} & \textbf{83.3} & 51.3 & \textbf{59.2} & \textbf{48.4} & \textbf{84.4}& \textbf{45.2} & \textbf{62.0} & \textbf{48.6} \\
\bottomrule
\end{tabular}
}
\vspace{-3mm}
\end{table*}

\section{Experiments}
\subsection{Experimental Setup}
\label{sec:exp}
\noindent \textbf{Benchmarks.}
We evaluate the effectiveness of the Info-Gain Sampler on diverse settings:
(1) Fully-Attention MDM Reasoning:
Math (GSM8K, MATH-500; 0-shot)~\cite{cobbe2021gsm8k,hendrycksmath2021},
Code (HumanEval, MBPP; 0-shot)~\cite{chen2021evaluating,austin2021program},
and Planning (4$\times$4 Sudoku; 5-shot~\cite{qin2025backtrack}, Countdown; 3-shot with 3 numbers~\cite{ye2025dream}),
reporting average Pass@1 accuracy over 5 runs;
{(2) Semi-Autoregressive MDM Reasoning:}
evaluated on the same Math and Code benchmarks
in a zero-shot setting, reporting average Pass@1 accuracy over 5 runs;
{(3) Multimodal Text-to-Image Generation:}
evaluated on ImageNet-512 and GenEval~\cite{ghosh2023geneval},
with IS~\cite{salimans2016improved}, FID~\cite{heusel2017gans},
and sFID~\cite{radford2018improving,salimans2016improved} reported for ImageNet-512,
and attribute-wise scores for GenEval;
{(4) Creative Writing:}
evaluated on AlpacaEval~\cite{alpaca_eval}, using an LLM-as-a-judge to compute
the length-controlled win rate~\cite{dubois2024length} against baselines following past works~\cite{nguyen2024turning}.

\noindent\textbf{Models.} For reasoning tasks, we use Dream-7B-Instruct~\cite{ye2025dream} as the fully-attention representative, and {SDAR-8B-Chat}~\cite{cheng2025sdar} and {TraDo-8B-Instruct}~\cite{wang2025revolutionizingreinforcementlearningframework} for block diffusion settings using KV-cache following ~\cite{wu2025fast}. For image generation, we employ the {MMaDa}~\cite{yang2025mmada} model.
For creative writing, we employ the SDAR-8B-Chat model.

\begin{table*}[!ht]
\centering
\caption{Performance on semi-autoregressive MDMs (\textbf{SDAR-8B-Chat} and \textbf{TraDo-8B-Instruct}). Results are reported with a block size of 16 and token temperature $\tau_{\text{token}}=0.7$. We report the accuracy for each task, along with the average accuracy (Avg.) and cumulative entropy ($\tilde{H}$) per generation. The Info-Gain Sampler consistently achieves the strongest performance across all settings.}
\label{tab:semi_ar_results}
\resizebox{\textwidth}{!}{
\begin{tabular}{llcccccc|cccccc}
\toprule
 & & \multicolumn{6}{c}{\textbf{SDAR-8B-Chat}} & \multicolumn{6}{c}{\textbf{TraDo-8B-Instruct}} \\
\cmidrule(lr){3-8} \cmidrule(lr){9-14}
\textbf{K} & \textbf{Sampler} & \textbf{GSM8K} & \textbf{MATH500} & \textbf{HumanEval} & \textbf{MBPP} & \textbf{Avg.} & \textbf{$\tilde{H}\downarrow$} & \textbf{GSM8K} & \textbf{MATH500} & \textbf{HumanEval} & \textbf{MBPP} & \textbf{Avg.} & \textbf{$\tilde{H}\downarrow$} \\
\midrule
\multirow{6}{*}{2} 
& Entropy    & 42.2 & 24.4 & 26.2 & 20.6 & 28.4 & 238.6 & 31.9 & 17.0 & 20.7 & 21.8 & 22.8 & 419.5 \\
& Confidence & 47.2 & 36.6 & 24.4 & 20.2 & 32.1 & 204.1 & 36.5 & 39.2 & 19.5 & 22.0 & 29.3 & 334.1 \\
& Margin     & 45.2 & 22.4 & 19.5 & 19.8 & 26.7 & 230.9 & 33.2 & 17.0 & 18.9 & 21.8 & 22.7 & 398.0 \\
& KLASS      & 50.4 & 32.3 & 30.7 & 26.6 & 35.0 & 210.3 & 50.4 & 32.3 & 30.7 & 26.6 & 35.0 & 334.3 \\
& LookUM      & 75.3 & 44.9 & 28.2 & 31.8 & 45.0 & 103.2 & 79.8 & 40.4 & 46.8 & 33.9 & 50.2 & 119.2 \\
& \textbf{Info-Gain} & \textbf{82.7} & \textbf{54.6} & \textbf{46.3} & \textbf{39.4} & \textbf{55.8} & \textbf{74.1} & \textbf{83.9} & \textbf{40.9} & \textbf{58.8} & \textbf{37.4} & \textbf{55.3} & \textbf{98.0} \\
\hdashline
\multirow{6}{*}{1} 
& Entropy    & 68.8 & 44.6 & 37.8 & 49.0 & 50.0 & 120.4 & 63.9 & 36.4 & 37.8 & 42.4 & 45.2 & 171.4 \\
& Confidence & 67.9 & 51.4 & 42.1 & 46.2 & 51.9 & 117.4 & 64.5 & 55.4 & 40.2 & 47.6 & 51.9 & 163.5 \\
& Margin     & 65.3 & 40.2 & 32.3 & 43.2 & 45.3 & 138.2 & 62.3 & 36.4 & 37.2 & 42.4 & 44.5 & 208.0 \\
& KLASS      & 69.9 & 42.3 & 45.7 & 46.6 & 51.1 & 105.3 & 65.4 & 40.8 & 40.9 & 47.0 & 48.5 & 180.3 \\
& LookUM  & 80.3 & 60.0 & 38.2 & 39.8 & 54.6 & 53.7 & 88.0 & 46.2 & 43.4 & 49.8 & 56.9 & 60.8 \\
& \textbf{Info-Gain} & \textbf{87.9} & \textbf{61.8} & \textbf{62.2} & \textbf{53.0} & \textbf{66.2} & \textbf{41.0} & \textbf{88.4} & \textbf{62.8} & \textbf{67.4} & \textbf{54.0} & \textbf{68.2} & \textbf{52.1} \\
\bottomrule
\end{tabular}
}
\vspace{-3mm}
\end{table*}
\begin{table*}[!h]
\centering
\caption{Text-to-Image results on \textbf{GenEval} and \textbf{ImageNet-512} with token temperature $\tau_{\text{token}}=0.4$. The Info-Gain Sampler demonstrates superior alignment and fidelity, significantly outperforming all baselines across multimodal generation metrics.}
\label{tab:image_results_full}
\resizebox{\textwidth}{!}{
\begin{tabular}{lccccccccccc}
\toprule
\multirow{2}{*}{\textbf{Method}} & \multicolumn{7}{c}{\textbf{GenEval}} & \multicolumn{3}{c}{\textbf{ImageNet-512}} \\
\cmidrule(lr){2-8} \cmidrule(lr){9-11}
& single obj.$\uparrow$ & two obj. $\uparrow$ & count $\uparrow$ & colors $\uparrow$ & pos $\uparrow$ & attr $\uparrow$ & Avg $\uparrow$ & IS $\uparrow$ & FID $\downarrow$ & sFID $\downarrow$ \\
\midrule
Uniform & 94.1 & 66.7 & 38.4 & 78.2 & 19.0 & 28.8 & 54.2 & 49.3 & 46.8 & 123.9 \\
Entropy & 94.3 & 67.3 & 46.0 & 79.9 & 17.8 & 26.8 & 55.3 & 52.4 & 44.8 & 93.4 \\
Confidence & 93.8 & \textbf{69.7} & 46.3 & \textbf{81.9} & 16.0 & 27.0 & 56.0 & 53.3 & 43.3 & 92.0 \\
Margin & 94.0 & 68.7 & 47.3 & 80.1 & 19.0 & 29.0 & 56.3 & 51.9 & 45.2 & 95.3 \\
\textbf{Info-Gain} & \textbf{97.5} & 68.7 & \textbf{47.5} & 79.8 & \textbf{25.0} & \textbf{32.0} & \textbf{58.2} & \textbf{63.0} & \textbf{38.1} & \textbf{83.7} \\
\bottomrule
\end{tabular}
}
\vspace{-3mm}
\end{table*}
\begin{table}[!h]
    \centering
    \caption{Win rate of Info-Gain Sampler against baselines (\%)}
    \label{tab:creative_writing}
    \resizebox{\columnwidth}{!}{
    \begin{tabular}{@{}cccccc@{}}
        \toprule
        \multirow{2}{*}{Temperature} & \multirow{2}{*}{K} & \multicolumn{4}{c}{\textbf{Win rate vs. baseline (\%)}} \\
        \cmidrule(lr){3-6}
        & & AR & Confidence & Entropy & Margin \\
        \midrule
        \multirow{2}{*}{0.5} & 1 & 60.1 & 65.8 & 59.1 & 63.6 \\
                             & 2 & 63.9 & 68.9 & 70.4 & 64.7 \\
        \midrule
        \multirow{2}{*}{1.0} & 1 & 54.8 & 57.7 & 60.1 & 55.2 \\
                             & 2 & 65.2 & 61.1 & 65.7 & 57.5 \\
        \midrule
        \multirow{2}{*}{1.5} & 1 & 58.4 & 53.0 & 60.3 & 54.6 \\
                             & 2 & \textbf{69.7} & \textbf{70.1} & \textbf{80.3} & \textbf{66.8} \\
        \bottomrule
    \end{tabular}
    }
\vspace{-5mm}
\end{table}
\noindent\textbf{Baselines.}
We compare Info-Gain Sampler against several sampling baselines, including \textit{Uniform}~\cite{nie2025large}, \textit{Autoregressive (AR)}, \textit{Entropy}~\cite{ye2025dream}, \textit{Confidence}~\cite{chang2022maskgit}, \textit{Margin}~\cite{kim2025train}, \textit{KLASS}~\cite{kim2025klass}, and \textit{PC-Sampler}~\cite{huang2025pc}. We include a concurrent work, \textit{LookUM}~\cite{lee2025lookahead}, which also employs a look-ahead mechanism. We adapt their method to keep it as close as possible to the Info-Gain Sampler, enabling a fairer comparison. Detailed descriptions of these baselines are provided in Appendix~\ref{appendix:baselines}.
For math and code tasks, we adopt a block diffusion approach, as previous studies~\cite{arriola2025block,wu2025fast} have shown that this can significantly improve performance. For planning-centric tasks, we remove positional decoding constraints by setting the block size equal to the total generation length, allowing for global optimization across the entire sequence.

\noindent \textbf{Hyperparameters.} For reasoning tasks, we employ a position temperature $\tau_{\text{pos}}=0.1$ and $N=8$ candidates for the Info-Gain Sampler, with the acceleration threshold $\gamma$ set to $0.8$. We evaluate the performance under both $K=1$ and $K=2$ tokens per step settings. For text-to-image experiments, we set $\tau_{\text{pos}}=0.4$ and $N=8$ with a 50-step cosine scheduler. Detailed settings for all benchmarks and baseline-specific parameters are provided in Appendix~\ref{appendix:hyperparameters}.

\subsection{Results and Analysis}
\label{sec:result-analysis}
\noindent \textbf{Reasoning on Full-Attention MDMs.}
As shown in Table~\ref{tab:dream_results}, the Info-Gain Sampler consistently outperforms all baselines on Dream-7B-Instruct while using the acceleration techniques introduced in Section~\ref{sec:acceleration} to keep the additional overhead modest. In particular, Info-Gain Sampler delivers substantial gains in average accuracy, surpassing the best-performing baselines by {3.1} percentage points at $K=2$ and {2.9} percentage points at $K=1$. Experimental results further demonstrate that Info-Gain Sampler attains a significantly lower cumulative entropy $\tilde{H}$, reaching only {65.9\%} ($K=2$) and {62.0\%} ($K=1$) of the best-performing greedy selection baseline (PC-Sampler~\cite{huang2025pc})—underscoring its ability to discover more globally optimized trajectories. Compared to LookUM~\cite{lee2025lookahead}, a concurrent baseline that incorporates a look-ahead term, our method achieves superior performance on Code and Planning tasks. These tasks demand high token-level precision, where our immediate cost term proves more effective in mitigating local errors.

\noindent \textbf{Reasoning on Semi-AR MDMs.}
Results for Semi-AR models (Table~\ref{tab:semi_ar_results}) further validate the robustness of Info-Gain Sampler. Notably, while introducing a non-zero token temperature ($\tau_{\text{token}}=0.7$) degrades the performance of baselines, Info-Gain Sampler maintains a substantial lead, outperforming the best baseline by {11.6} and {11.3} percentage points in average accuracy under $K=1$ settings for SDAR-8B-Chat and TraDo-8B-Instruct, respectively. The consistent reduction in $\tilde{H}$ across different architectures underscores the universal effectiveness of our information-gain-based objective.

\noindent \textbf{Text-to-Image Generation.}
In multimodal settings (Table~\ref{tab:image_results_full}), Info-Gain Sampler excels in both alignment and fidelity. It achieves the highest average GenEval score ({58.2} vs. 56.3 for Margin) and significantly improves "positional" ({25.0} vs. 19.0) and "attribute" ({32.0} vs. 29.0) sub-scores. Furthermore, on ImageNet-512, Info-Gain Sampler substantially improves FID (from 43.3 to 38.1) and IS (from 53.3 to 63.0), demonstrating its broad generalizability on multimodal generation tasks.

\noindent \textbf{Creative Writing.}
For creative writing (Table~\ref{tab:creative_writing}), the Info-Gain Sampler consistently outperforms all baselines across various token temperatures $\tau_{\text{token}}$. At a high temperature of $\tau_{\text{token}}=1.5$, where increased stochasticity often degrades coherence, our sampler achieves a peak win rate of \textbf{80.3\%} against the Entropy baseline. By prioritizing informative actions through its look-ahead mechanism, the Info-Gain Sampler exhibits superior robustness to temperature scaling in MDMs, effectively balancing creativity and coherence. Across all settings and baselines, it maintains an average win rate of \textbf{62.8\%}, demonstrating that introducing information gain enhances both creative diversity and textual coherence under temperature variations.

\begin{figure}[!h]
    \centering
    \begin{subfigure}[b]{0.5\textwidth}
        \centering
        \includegraphics[width=\linewidth]{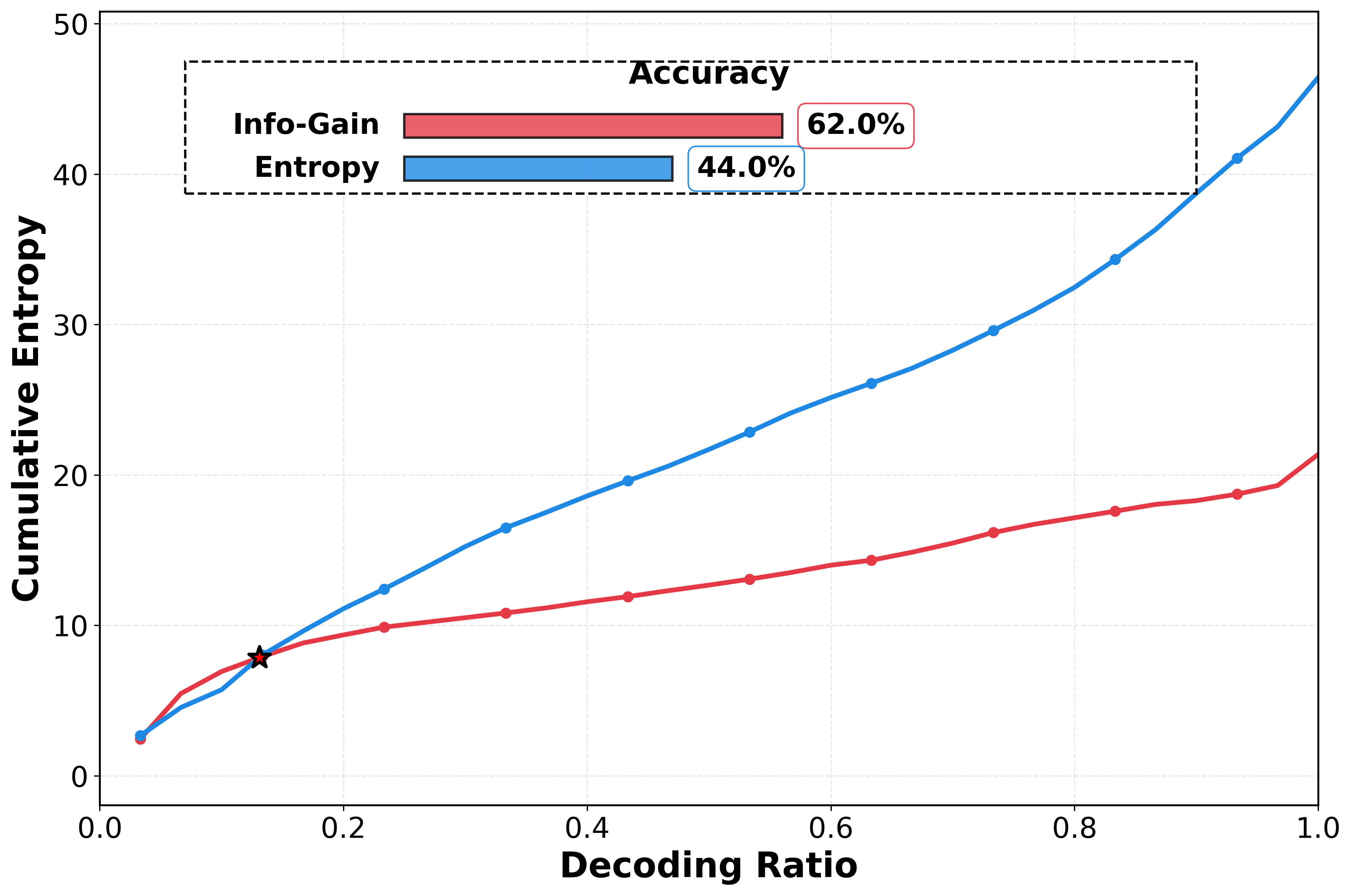}
        \caption{Cumulative entropy trajectories}
        \label{fig:abla_entropy}
    \end{subfigure}
    \hfill
    \begin{subfigure}[b]{0.5\textwidth}
        \centering
        \includegraphics[width=0.98\linewidth]{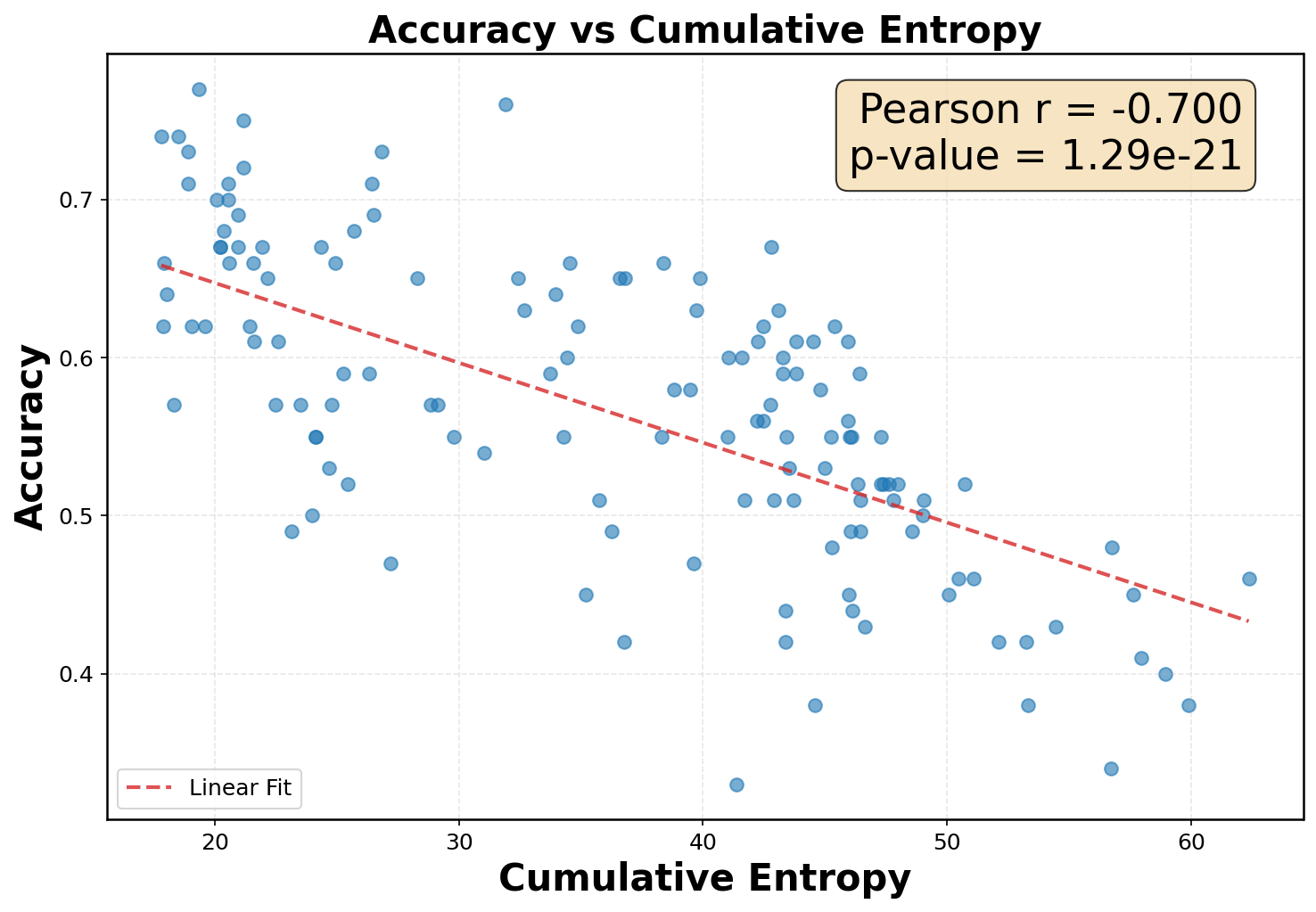}
        \caption{Accuracy vs. Cumulative Entropy}
        \label{fig:ent vs acc}
    \end{subfigure}
    \caption{\textbf{Analysis of Cumulative Entropy.} (a) Cumulative entropy trajectories for the Entropy baseline and Info-Gain Sampler on a synthetic set of 100 simple arithmetic problems that can be answered within a short window. We use global decoding with a fixed length of 64 tokens. (b) Correlation between average accuracy and average cumulative entropy across various sampling configurations.}
    \label{fig:entropy_analysis}
\end{figure}

\begin{figure}[!h]
\centering
\includegraphics[width=0.80\linewidth]{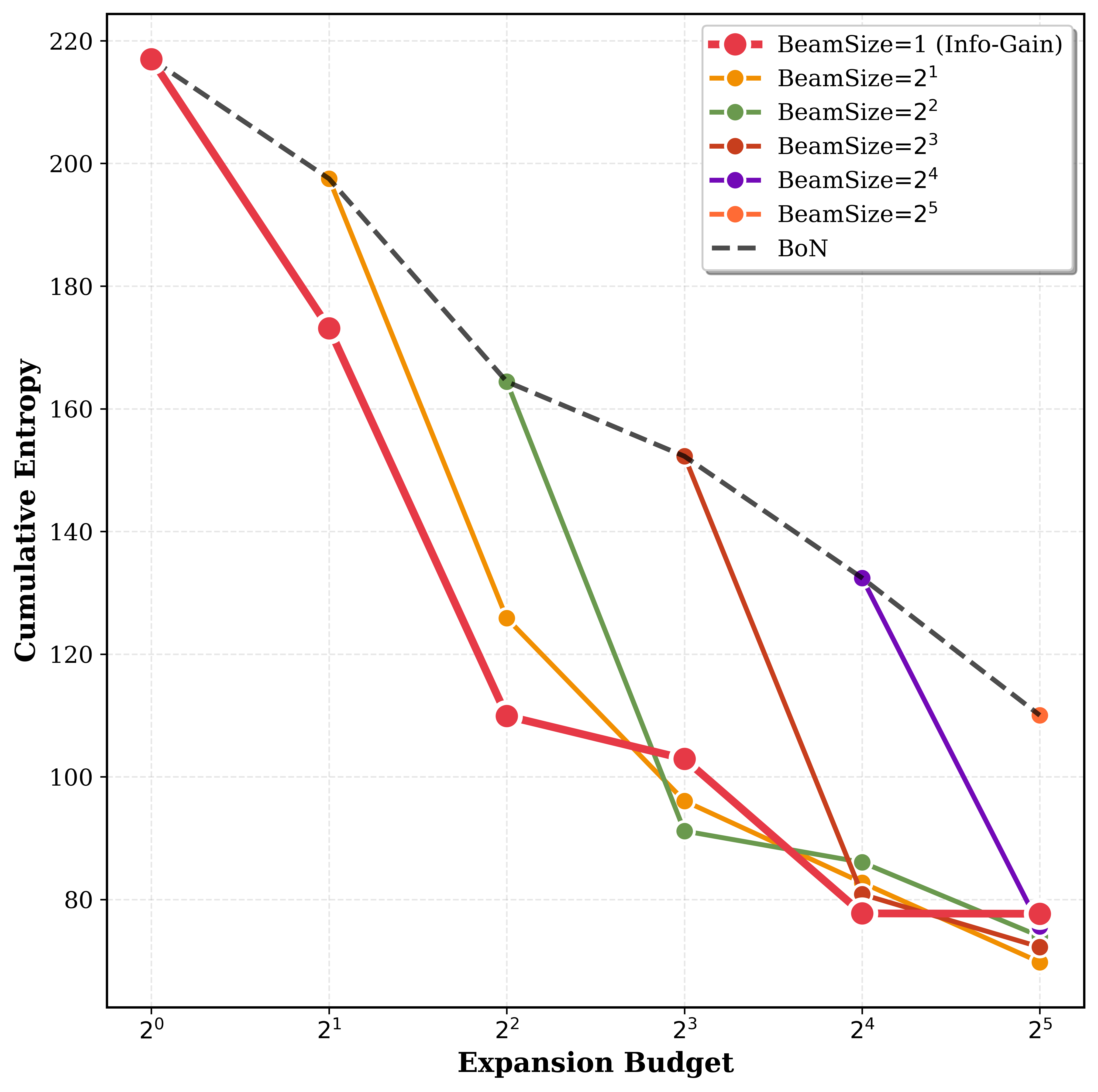}
\vspace{-1mm}
    \caption{Impact of different beam sizes on the MATH-500 dataset. Specifically: \textbf{Beam Size = 1} is a special case equivalent to the Info-Gain Sampler; \textbf{Beam Size = Expansion Budget} is equivalent to the Best-of-$N$ (BoN) baseline; and \textbf{Intermediate Values} represent a look-ahead beam search algorithm using Info-Gain as the pruning heuristic.}
    \label{fig:budget_ablation}
\end{figure}
\begin{figure}[!h]
    \centering
    \includegraphics[width=1\linewidth]{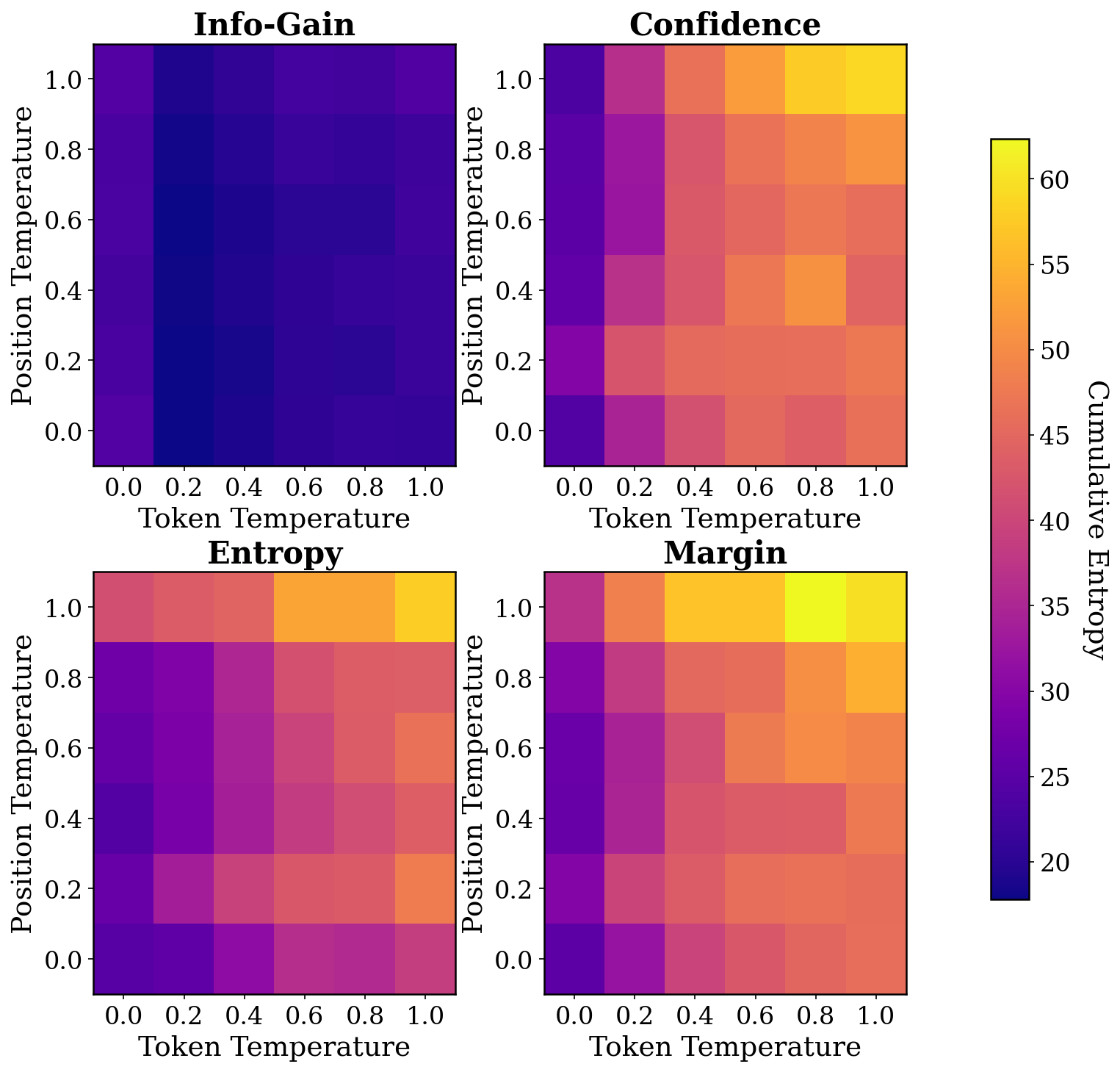}
    \vspace{-4mm}
    \caption{\textbf{Temperature Sensitivity.} Cumulative trajectory uncertainty under varying position and token temperatures on the 100 simple arithmetic problems, evaluated using global decoding with a fixed length of 64 tokens.}
    \label{fig:temperature-entropy}
\end{figure}
\subsection{Ablation Study}
\vspace{-1mm}
\noindent\textbf{Optimization of Cumulative Uncertainty.}
As shown in Table~\ref{tab:dream_results}, the \textbf{Info-Gain Sampler} significantly outperforms baselines in optimizing cumulative uncertainty. By tracking cumulative entropy during decoding on mathematical reasoning tasks (Fig.~\ref{fig:entropy_analysis}), we observe that:
(1) The Info-Gain heuristic balances immediate cost with future gains, yielding non-linear entropy growth that stabilizes earlier than the greedy Entropy baseline.
(2) Cumulative entropy shows a strong \textbf{negative correlation with accuracy} (Pearson's $r=-0.70$), validating it as a reliable proxy for decoding quality.

\noindent\textbf{Comparison of Info-Gain Variants.}
We compare the \textbf{Info-Gain Sampler} ($B=1$), \textbf{Info-Gain Beam Search} ($B>1$), and \textbf{Best-of-N (BoN)} under a fixed computational budget (Figure~\ref{fig:budget_ablation}). For beam search variants, we set the beam width to $B$ and rank partial hypotheses using the accumulated Info-Gain objective plus the current state-uncertainty score. The decoding process terminates when all hypotheses in the beam have been fully decoded.
(1) The \textbf{Info-Gain Sampler ($B=1$)} performs near the Pareto frontier, achieving near-optimal results while remaining highly parallelizable and avoiding complex KV-cache management.
(2) Both Info-Gain variants significantly outperform \textbf{BoN}, proving that global planning via information gain is superior to simply increasing independent samples.
(3) Increasing \textbf{Beam Size} under given expansion budget yields marginal uncertainty reduction but incurs higher memory overhead (Appendix~\ref{appendix:memory}).

\noindent\textbf{Compatibility with Temperature Sampling.} 
We investigate the impact of position and token temperature settings on cumulative uncertainty. For the baselines, the position temperature mechanism is implemented by applying a softmax with temperature to the heuristic scores followed by categorical sampling, where a position temperature of zero corresponds to the original greedy sampling.
As shown in Figure~\ref{fig:temperature-entropy}, the Info-Gain Sampler maintains stable, low trajectory uncertainty across various temperature scales without sensitive tuning. Importantly, low cumulative entropy reflects more optimized decoding rather than mode collapse, as evidenced by the preserved diversity and competitive win rates in creative writing (Table~\ref{tab:creative_writing}). In contrast, other baselines are highly sensitive to temperature changes, leading to decoding instability. 

\vspace{-3mm}
\section{Limitation and Future Work}
\vspace{-1mm}
While the Info-Gain Sampler demonstrates significant improvements in generation quality across multiple domains, there are several avenues for further refinement.

\noindent\textbf{More Efficient Implementation.} Although the Info-Gain Sampler leverages parallel evaluation and acceleration techniques to minimize latency, the search process still incurs higher computational cost than greedy decoding. Future work could explore more efficient lookahead mechanisms, adaptive branching, and hardware-level optimizations to further enhance inference throughput.

\noindent\textbf{Refinement of Action Sampler.} Our action sampler currently leverages local uncertainty as a heuristic for candidate generation. Although this approach is robust and yields high-quality plans, future research could investigate more sophisticated sampling strategies that go beyond local heuristics. Such advancements would likely enhance both the diversity and quality of the candidate set.

% \noindent\textbf{Training-Inference Synergy.} While the Info-Gain Sampler is a versatile, training-free method, its performance could be further enhanced by aligning the model's training objective with strategic planning. Future work could investigate planning-aware masking schedules or reinforcement learning from trajectory-based feedback to better exploit the global dependencies identified during sampling.

\begin{table*}[t]
\centering
\caption{Unified view of MDM samplers. \textbf{Selection Criteria} denotes the objective function optimized at each decoding step. \textbf{Temperature Sensitivity} indicates robustness of performance to temperature-based stochastic sampling, while \textbf{Greedy Location Selection} marks whether the sampler selects positions based solely on immediate certainty without considering future information gain.}
\label{tab:comparison}
\begin{tabular}{@{}lccc@{}}
\toprule
\textbf{Sampler} & \textbf{Selection Criteria} & \textbf{Temperature Sensitivity} & \textbf{Greedy Selection} \\ 
\midrule
Uniform~\cite{austin2021structured}  & $-$ & High & {\color{green!60!black}$\times$} \\
Confidence~\cite{chang2022maskgit}  & $\sum_{\ell \in A_t}\max p_\theta(x_\ell\mid z_t)$ & High & {\color{red}$\checkmark$} \\
Entropy~\cite{benhamu2025entropy}  & $-\sum_{\ell \in A_t} H^{(\ell)}(z_t)$ & High & {\color{red}$\checkmark$} \\
Margin~\cite{kim2025train}  & $\sum_{\ell \in A_t}(p_{\text{top1}} - p_{\text{top2}})$ & High & {\color{red}$\checkmark$} \\
KLASS$^\dagger$~\cite{kim2025klass}  & $\sum_{\ell \in A_t}(\max p_\theta(x_\ell\mid z_t)+\mathbf{1}_{D_{KL}<\epsilon})$ & High & {\color{red}$\checkmark$} \\
PC-Sampler~\cite{huang2025pc}  & $\sum_{\ell \in A_t}(w_\ell \cdot \mathcal{C}_t^{(\ell)})$ & Moderate & {\color{red}$\checkmark$} \\
LookUM~\cite{lee2025lookahead}  & $\frac{1}{|\mathcal{M}_{t-1}|} \cdot\sum_{\ell \in \mathcal{M}_{t-1}}\phi(z_{t-1}, \ell)$ & Moderate & {\color{green!60!black}$\times$} \\

\midrule
\textbf{Info-Gain (Ours)} & $\text{IG}(a_t; z_t)-\sum_{\ell \in A_t} H^{(\ell)}(z_t)$ & \textbf{Low} & {\color{green!60!black}$\times$} \\ 
\bottomrule
\end{tabular}
$^\dagger$We adapt KLASS to ensure its decoding procedure adheres to the specified step scheduler.
\end{table*}
\section{Related Work}
\noindent\textbf{Masked Diffusion Models.} Discrete diffusion models provide a non-autoregressive alternative for sequence generation, grounded in theoretical frameworks for discrete and masked data~\cite{austin2021structured, lou2024discrete, rombach2022high}. Unlike ARMs, which often require training interventions for long-horizon planning~\cite{hu2025beliefstatetransformer,teoh2025nextlatentpredictiontransformerslearn}, bidirectionality in MDMs naturally enables global reasoning~\cite{ye2024beyond}. MDM research has diverged into training large models from scratch, like LLaDA \cite{nie2025large}, and adapting pre-trained autoregressive models, such as Dream, DiffuLLaMA, and DIMPLE \cite{ye2025dream, gong2024scaling, yu2025dimple}. To enhance long-sequence modeling, hybrid semi-autoregressive (Semi-AR) architectures like Block Diffusion~\cite{arriola2025block}, Diffusion Forcing~\cite{chen2024diffusion}, and absorbing diffusion variants \cite{zheng2025masked} enable KV-caching for better efficiency. Recent optimizations like Fast-dLLM/v2 \cite{wu2025fast, wu2025fastv2} and models such as SDAR \cite{cheng2025sdar}, TraDo \cite{wang2025revolutionizingreinforcementlearningframework}, and WeDLM \cite{liu2025wedlm} have further advanced complex reasoning and long-text generation.

\noindent\textbf{Samplers.} 
Due to the causal factorization, sampling strategies for ARMs typically rely on assessing and regulating local uncertainty in next-token prediction to improve generation quality and diversity. A broad class of samplers has been proposed, including deterministic decoding strategies such as beam search~\cite{wu2016google, freitag2017beam} and stochastic decoding strategies~\cite{holtzman2019curious,fan2018hierarchicalneuralstorygeneration,nguyen2024turning}. 
In contrast to ARMs, MDMs introduce a fundamentally different sampling problem: beyond deciding \emph{what} token to decode, samplers must also decide \emph{where} to decode within the non-causal sequence. This expanded decision space amplifies the impact of early decoding choices and renders local uncertainty criteria insufficient. Still, existing MDM samplers typically rely on greedy, certainty-based heuristics to select decoding positions, using metrics such as entropy and margin scores~\cite{nie2025large, ye2025dream, chang2022maskgit}. Some approaches further incorporate calibration or stability refinements to improve robustness~\cite{benhamu2025entropy, kim2025klass, huang2025pc}. Nevertheless, these approaches share a common limitation: decoding decisions are made myopically based on greedy metrics. They do not account for the downstream impact of each decoding decision on global uncertainty or information gain across the remaining masked tokens. We provide a comparison between existing samplers for MDMs in Table~\ref{tab:comparison}.
% \footnote{To enable a more comprehensive evaluation, we also include LookUM~\cite{lee2025lookahead}, a concurrent work that shares a similar motivation with our approach. We provide a straight comparison in Section~\ref{sec:result-analysis} and a detailed comparison in Appendix~\ref{appendix:baselines}.}

\section{Conclusion}

We propose the \textbf{Info-Gain Sampler}, a training-free decoding framework for Masked Diffusion Models that exploits bidirectional attention to incorporate information gain in action selection, balancing immediate certainty with future uncertainty reduction and mitigating the myopia of uncertainty-based samplers. Across reasoning, code generation, planning, and image generation tasks, Info-Gain Sampler consistently improves performance, achieving gains of 2.9--11.6 percentage points in average reasoning accuracy across full-attention and semi-autoregressive settings, a 62.8\% average win rate in creative writing, and an improvement of 1.9 percentage points on GenEval. It remains compatible with both full-attention and semi-autoregressive architectures, reduces \textbf{cumulative uncertainty}, and offers a principled bridge from local heuristics to global planning for non-autoregressive generation.
\section*{Impact Statement}
This paper presents work whose goal is to advance the field of Machine Learning. There are many potential societal consequences of our work, none which we feel must be specifically highlighted here.

\bibliography{icml}
\bibliographystyle{icml2026}
\newpage
\begin{onecolumn}
\appendix

\section{Formal Definitions of Baseline Samplers}
\label{appendix:baselines}

To provide a rigorous comparison, we formalize the action selection mechanism for each baseline sampler. At each decoding step $t$, let $p_\theta(\cdot \mid z_t, \ell)$ denote the predicted token distribution at masked position $\ell \in \mathcal{M}_t$. The samplers differ in their scoring function $\phi(z_t, \ell)$, where the top-$K$ positions with the highest scores are selected for decoding.

\paragraph{Uniform~\cite{nie2025large}} This baseline selects positions uniformly at random from the mask set $\mathcal{M}_t$:
\begin{equation}
    \phi_{\text{Uniform}}(z_t, \ell) = \epsilon, \quad \epsilon \sim \mathcal{U}(0, 1)
\end{equation}

\paragraph{Confidence~\cite{chang2022maskgit}} This baseline prioritizes positions where the model is most certain about the top-1 prediction:
\begin{equation}
    \phi_{\text{Conf}}(z_t, \ell) = \max_{v \in \mathcal{V}} p_\theta(v \mid z_t, \ell)
\end{equation}

\paragraph{Entropy~\cite{ye2025dream}} Positions with the minimum predictive uncertainty are selected:
\begin{equation}
    \phi_{\text{Entropy}}(z_t, \ell)
    =
    -H(X^{(\ell)} \mid z_t)
    =
    \sum_{v \in \mathcal{V}} p_\theta(v \mid z_t, \ell) \log p_\theta(v \mid z_t, \ell)
\end{equation}

\paragraph{Margin~\cite{kim2025train}} This baseline considers the gap between the two most likely tokens, selecting positions with the largest margin:
\begin{equation}
    \phi_{\text{Margin}}(z_t, \ell) = p_{\text{top1}} - p_{\text{top2}}
\end{equation}

\paragraph{KLASS~\cite{kim2025klass}} This method incorporates a KL-divergence constraint to maintain consistency between consecutive decoding steps. Originally, KLASS~\cite{kim2025klass} supports dynamic decoding by selecting all positions that satisfy the KL threshold. To ensure a fair performance comparison with other samplers, we adapt it to decode a fixed number of tokens per step. Following the original implementation, we set the threshold $\epsilon = 5 \times 10^{-4}$. Effectively, KLASS prioritizes positions where the KL divergence is below the threshold, ranking them by confidence, and only falls back to other positions if more tokens are required. This can be formalized as the following scoring function:
\begin{equation}
    \phi_{\text{KLASS}}(z_t, \ell)
    =
    \phi_{\text{Conf}}(z_t, \ell)
    +
    \mathbb{1}\!\left[
    D_{\mathrm{KL}}\!\left(
    p_\theta(\cdot \mid z_t, \ell)
    \,\Vert\,
    p_\theta(\cdot \mid z_{t-1}, \ell)
    \right)
    < \epsilon
    \right]
\end{equation}
where positions with higher scores are prioritized. In our experiments, we first select positions satisfying the KL constraint based on confidence, and then fill the remaining budget from the other positions.

\paragraph{PC-Sampler~\cite{huang2025pc}} This sampler regulates the decoding trajectory by combining a position-aware weight with content-aware confidence calibration. It modulates the selection priority of each candidate token $x_\ell$ at position $\ell$ using an exponential decay function $w_\ell = e^{-\lambda \cdot \ell}$, where $\lambda \ge 0$ controls the positional penalty. To discourage generic tokens, it calibrates the confidence score using the token frequency distribution $p_{\mathcal{D}'}(x_\ell)$ and a content-aware calibration term $\mathcal{C}^{(\ell)
}_t$:
\begin{equation}
    \phi_{\text{PC}}(z_t, \ell) = \mathcal{C}^{(\ell)}_t \cdot w_\ell
\end{equation}

\paragraph{Block Diffusion~\cite{arriola2025block,wu2025fast}} This sampler employs a position scheduling function $I_t(\ell)$ that indicates whether position $\ell$ lies within the currently active diffusion block at time step $t$. Specifically, $I_t(\ell)$ takes the value 1 if $\ell \in \mathcal{B}_{\sigma(t)}$ and 0 otherwise, where $\mathcal{B}_{\sigma(t)}$ denotes the active block determined by a scheduling rule $\sigma(t)$. The sampling process restricts candidate positions to the active block and applies a standard heuristic:
\begin{equation}
\phi_{\text{Block}}(z_t, \ell) = 
\begin{cases} 
\phi(z_t, \ell), & \text{if } I_t(\ell) = 1, \\
-\infty, & \text{otherwise}.
\end{cases}
\end{equation}
Here, the active block $\mathcal{B}_{\sigma(t)}$ slides over time according to $\sigma(t)$, typically following a sequential or random traversal of the position indices, thereby gradually diffusing information across the entire sequence.

\paragraph{LookUM~\cite{lee2025lookahead}}
This framework improves decoding in masked diffusion language models by selecting optimal token unmasking orders during inference. Unlike myopic heuristics that consider only immediate next steps, LookUM generates multiple candidate unmasking trajectories (paths) and selects the most promising one based on global sequence-level certainty. This can be formalized as the following scoring function:
\begin{equation}
    \phi_{\text{LookUM}}(z_t, \ell)
    =
    \frac{1}{|\mathcal{M}^{(\ell)}_{t-1}|}
    \sum_{j \in \mathcal{M}^{(\ell)}_{t-1}}
    \phi(z^{(\ell)}_{t-1}, j)
\end{equation}
where $z^{(\ell)}_{t-1}$ denotes the state obtained after tentatively unmasking position $\ell$, $\mathcal{M}^{(\ell)}_{t-1}$ denotes its remaining masked positions, and $\phi(\cdot)$ is a base metric that can be instantiated as negative entropy, confidence, or margin. Following the empirical findings in the paper, we adopt negative entropy as our specific implementation of $\phi(\cdot)$ due to its demonstrated effectiveness in capturing sequence-level uncertainty. We omit the design of SMC and NIS in LookUM~\cite{lee2025lookahead} to enable a clear comparison, as they require task-specific parameter tuning.
\section{Detailed Hyperparameter Settings}
\label{appendix:hyperparameters}

In this section, we provide a detailed overview of the hyperparameter settings used across all experiments. All experiments were conducted on NVIDIA A800 GPUs.

\noindent\textbf{Reasoning and Coding Tasks.} For tasks involving logical reasoning (GSM8K, MATH500) and code generation (HumanEval, MBPP), we set the token sampling temperature $\tau_{\text{token}} = 0.7$ to strike a balance between output diversity and structural coherence. For the Info-Gain Sampler, we employ a relatively low position temperature $\tau_{\text{pos}} = 0.1$ to focus the selection on the most informative candidate positions, while generating $N=8$ candidate actions per step for parallel evaluation. To maintain high throughput during predictable decoding phases, we set the acceleration threshold $\gamma = 0.8$, which allows the sampler to skip the full evaluation routine when the maximum token probability is high. For block diffusion settings on Semi-AR models (SDAR and TraDo), we use a fixed block size of 16. The decoding budget $K$ (tokens per step) is varied between 1 and 2 to evaluate performance under different acceleration ratios. Maximum generation lengths are benchmark-specific: 256 tokens for GSM8K, HumanEval, and MBPP; 512 tokens for MATH500 and SDAR-8B-Chat; and 1024 tokens for TraDo-8B-Instruct to accommodate longer reasoning chains.

\noindent\textbf{Text-to-Image Generation.} For multimodal experiments using the MMaDa model on ImageNet and GenEval, we adopt a more conservative token temperature $\tau_{\text{token}} = 0.4$ and a matching position temperature $\tau_{\text{pos}} = 0.4$ to ensure high fidelity and alignment with text prompts. The candidate set size is kept at $N=8$. We follow a 50-step decoding trajectory governed by a cosine schedule, which progressively reduces the number of masked positions to refine image details. For extreme acceleration tests (Section~\ref{appendix:few_step}), we utilize a 5-step linear schedule to evaluate the sampler's robustness under severe budget constraints.

\noindent\textbf{Creative Writing.} We evaluate creative writing performance using 200 prompts selected from the Alpaca dataset. Following the Min-P paper~\cite{nguyen2024turning}, we employ GPT-4.1~\cite{openai_gpt4_1} as an LLM judge to assess generation quality. For this experiment, the maximum generation length is set to 1024 tokens, with a fixed block size of 16.

% \noindent\textbf{Baselines} All baseline samplers (Uniform, Confidence, Entropy, Margin, KLASS, and PC-Sampler) are configured with the same token temperature $\tau_{\text{token}}$ as our method to ensure a fair comparison. For KLASS, we set the consistency threshold $\epsilon = 5 \times 10^{-4}$ following its original implementation. For PC-Sampler, we adopt the default decay parameters for positional weighting as specified in the original work.

\section{Theoretical Analysis of the Info-Gain Objective}
\label{appendix:mi_upper_bound}

\paragraph{Setup and notation.}
At decoding step $t$, let $z_t$ denote the current state and $\mathcal{M}_t$ the set of masked positions.  
For each $\ell \in \mathcal{M}_t$, define the per-position predictive entropy
\begin{equation}
H^{(\ell)}(z_t) := H\big(X^{(\ell)} \mid z_t\big),
\end{equation}
and the average state uncertainty
\begin{equation}
\mathcal{H}(z_t) := \frac{1}{|\mathcal{M}_t|} \sum_{\ell \in \mathcal{M}_t} H^{(\ell)}(z_t).
\end{equation}

For an action $a_t$ selecting positions $A_t \subseteq \mathcal{M}_t$, the next state is $z_{t-1} = \mathrm{Apply}(z_t, a_t)$ and $\mathcal{M}_{t-1} = \mathcal{M}_t \setminus A_t$. Following the main text, the Info-Gain utility is defined as
\begin{equation}
\mathrm{IG}(a_t; z_t) := \mathcal{H}(z_t) - \mathcal{H}(z_{t-1}), \qquad
C(a_t \mid z_t) := \sum_{\ell \in A_t} H^{(\ell)}(z_t),
\end{equation}
\begin{equation}
J_{\mathrm{IG}}(a_t; z_t) := \mathrm{IG}(a_t; z_t) - C(a_t \mid z_t),
\end{equation}
where $C(a_t\mid z_t)$ measures the immediate uncertainty of the chosen positions, and $\mathrm{IG}(a_t; z_t)$ quantifies the reduction in overall state uncertainty.

\paragraph{Expected information gain.}
Sampling $a_t$ from a sampler $\pi(\cdot \mid z_t)$ induces randomness in $\mathrm{IG}(a_t; z_t)$ and $J_{\mathrm{IG}}(a_t; z_t)$. Let
\begin{equation}
\tilde{I}(A_t; z_t) := \mathbb{E}[\mathrm{IG}(a_t; z_t)], \qquad
\bar{J}_{\mathrm{IG}}(A_t; z_t) := \mathbb{E}[J_{\mathrm{IG}}(a_t; z_t)],
\end{equation}
where the expectation is over the sampled token assignments while keeping $A_t$ fixed.

\begin{observationbox}
\begin{proposition}[Upper bound on expected information gain]
\label{prop:mibound}
For any state $z_t$ and position set $A_t \subseteq \mathcal{M}_t$,
\begin{equation}
\tilde{I}(A_t; z_t) \le \mathbb{E}[C(a_t \mid z_t)], \qquad \text{and thus} \quad \bar{J}_{\mathrm{IG}}(A_t; z_t) \le 0.
\end{equation}
\end{proposition}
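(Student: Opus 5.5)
The plan is to expand $\tilde I(A_t;z_t)$ position by position and to bound the expected entropy drop at each remaining masked position by a mutual-information term. Throughout, write $m:=|\mathcal{M}_t|$, $k:=|A_t|\ge 1$, $R:=\mathcal{M}_t\setminus A_t$ and $C:=\sum_{\ell\in A_t}H^{(\ell)}(z_t)$. Since $C$ depends only on $z_t$ and on $A_t$, we have $\mathbb{E}[C(a_t\mid z_t)]=C$. The second claim, $\bar J_{\mathrm{IG}}(A_t;z_t)=\tilde I(A_t;z_t)-C\le 0$, is then immediate from the first, so everything reduces to showing $\tilde I(A_t;z_t)\le C$.

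\textbf{Modelling assumption.} I would state explicitly that the denoiser is treated as a consistent joint model $p_\theta(\cdot\mid z_t)$ over the masked tokens. Under this assumption, the tokens $X^{A_t}$ are drawn from its joint conditional, and the marginals $p^{\ell}_\theta(\cdot\mid z_{t-1})$ are the exact conditionals given $X^{A_t}$. This is the idealized setting in which the optimal $p_\theta$ of the preliminaries answers every infilling query. Then for each $\ell\in R$,
\begin{equation*}
H^{(\ell)}(z_t)-\mathbb{E}\big[H^{(\ell)}(z_{t-1})\big]=I\big(X^{(\ell)};X^{A_t}\mid z_t\big)=:I_\ell .
\end{equation*}
Standard inequalities give
\begin{equation*}
I_\ell\le \min\Big\{H^{(\ell)}(z_t),\,H(X^{A_t}\mid z_t)\Big\}\le \min\big\{H^{(\ell)}(z_t),\,C\big\},
\end{equation*}
where the last step uses subadditivity of joint entropy.

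\textbf{Edge case and rearrangement.} If $k=m$, the next state is fully decoded, so $\mathcal{H}(z_{t-1})=0$ and $\tilde I=\mathcal{H}(z_t)=C/m\le C$. Otherwise, let $\bar h:=\frac{1}{m-k}\sum_{\ell\in R}H^{(\ell)}(z_t)$ and $\bar I:=\frac{1}{m-k}\sum_{\ell\in R}I_\ell$. The key algebraic step is to account carefully for the change of normalizer from $1/m$ to $1/(m-k)$. Writing $\mathcal{H}(z_t)=\frac{1}{m}\big(C+(m-k)\bar h\big)$ and $\mathbb{E}\,\mathcal{H}(z_{t-1})=\bar h-\bar I$ gives
\begin{equation*}
\tilde I(A_t;z_t)=\frac{C}{m}+\bar I-\frac{k}{m}\,\bar h .
\end{equation*}

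\textbf{Main obstacle.} The crude bound $\bar I\le C$ only yields $\tilde I\le (1+1/m)C$. The normalization term $-\frac{k}{m}\bar h$ must be used to absorb the excess $C/m$.

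\textbf{Closing the bound.} Averaging the per-position bounds gives $\bar I\le\min\{\bar h,C\}=:\mu$. Also $\bar h\ge\mu$, so
\begin{equation*}
\tilde I\le \frac{C}{m}+\mu-\frac{k}{m}\mu=\frac{C}{m}+\Big(1-\frac{k}{m}\Big)\mu\le \frac{C}{m}+\Big(1-\frac{k}{m}\Big)C=C\Big(1-\frac{k-1}{m}\Big)\le C,
\end{equation*}
where the last inequality uses $k\ge 1$. This gives $\tilde I(A_t;z_t)\le \mathbb{E}[C(a_t\mid z_t)]$, and hence $\bar J_{\mathrm{IG}}(A_t;z_t)\le 0$.

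I would close by remarking what the proposition means for the sampler. In expectation, no action can buy more uncertainty reduction than it pays in immediate cost. Actions that come close to equality are exactly those whose revealed tokens are highly informative about the remaining masks, which is the preference the sampler is meant to encode.
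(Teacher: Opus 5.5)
Your proposal is correct and follows the paper's proof: it uses the same decomposition $\tilde I(A_t;z_t)=\gamma(\alpha-\beta)+\bar I$ (your $\frac{C}{m}+\bar I-\frac{k}{m}\bar h$) and the same per-position bound $I_\ell\le\min\{H^{(\ell)}(z_t),H(X^{A_t}\mid z_t)\}\le\min\{H^{(\ell)}(z_t),C\}$, with the paper's case split on $\alpha\le\beta$ versus $\alpha>\beta$ replaced by your single chain through $\mu=\min\{\bar h,C\}$. Two of your additions are worthwhile: you state explicitly that $p_\theta$ is a consistent joint model, which the paper's ``entropy-reduction decomposition'' uses silently and which the bound genuinely needs, and you handle the case $|A_t|=|\mathcal{M}_t|$, where the paper's $\beta$ is undefined.
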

\end{observationbox}

\begin{proof}
Define
\begin{equation}
\alpha := \frac{1}{|A_t|} \sum_{\ell \in A_t} H^{(\ell)}(z_t), \quad
\beta := \frac{1}{|\mathcal{M}_{t-1}|} \sum_{\ell \in \mathcal{M}_{t-1}} H^{(\ell)}(z_t), \quad
\gamma := \frac{|A_t|}{|\mathcal{M}_t|}.
\end{equation}
Then $C(a_t \mid z_t) = |A_t| \alpha$ and $\mathcal{H}(z_t) = \gamma \alpha + (1-\gamma) \beta$.

Let $Y := X^{(A_t)}$ be the sampled assignments. For any $\ell \in \mathcal{M}_{t-1}$,
\begin{equation}
I(Y; X^{(\ell)} \mid z_t) \le \min(H(Y\mid z_t), H^{(\ell)}(z_t)) \le \min(|A_t| \alpha, H^{(\ell)}(z_t)).
\end{equation}
By the entropy-reduction decomposition,
\begin{equation}
\tilde{I}(A_t; z_t) = \gamma (\alpha - \beta) + \frac{1}{|\mathcal{M}_{t-1}|} \sum_{\ell \in \mathcal{M}_{t-1}} \mathbb{E}[I(Y; X^{(\ell)} \mid z_t)].
\end{equation}

\noindent \textbf{Case 1:} $\alpha \le \beta$. Then $I(Y; X^{(\ell)} \mid z_t) \le |A_t| \alpha$, giving
\begin{equation}
\tilde{I}(A_t; z_t) \le \gamma (\alpha - \beta) + |A_t| \alpha \le |A_t| \alpha = C(a_t \mid z_t).
\end{equation}

\noindent \textbf{Case 2:} $\alpha > \beta$. Then $I(Y; X^{(\ell)} \mid z_t) \le H^{(\ell)}(z_t)$ and
\begin{equation}
\tilde{I}(A_t; z_t) \le \gamma (\alpha - \beta) + \beta \le \alpha \le |A_t| \alpha = C(a_t \mid z_t).
\end{equation}

In both cases, $\tilde{I}(A_t; z_t) \le \mathbb{E}[C(a_t \mid z_t)]$, implying
\begin{equation}
\bar{J}_{\mathrm{IG}}(A_t; z_t) = \tilde{I}(A_t; z_t) - C(a_t \mid z_t) \le 0.
\end{equation}
\end{proof}
\paragraph{Practical implications.}
Proposition~\ref{prop:mibound} establishes that, for any fixed position set $A_t$, the expected Info-Gain utility $\bar{J}_{\mathrm{IG}}(A_t; z_t)$ is upper bounded by zero. Equivalently, the expected commitment cost $C(a_t \mid z_t)-\mathrm{IG}(a_t;z_t)$ is non-negative. In practice, we find that the realized $J_{\mathrm{IG}}(a_t; z_t)$, computed along individual decoding trajectories, closely tracks this expectation. On benchmarks such as GSM8K, most observed values are near zero and only mildly negative (e.g., $J_{\mathrm{IG}} \ge -5\times 10^{-4}$ for more than $95\%$ of cases), indicating that the selected actions often operate close to the theoretical upper bound. 

Furthermore, $\mathrm{IG}(a_t; z_t)$ is highly semantically sensitive: it captures not only the immediate uncertainty of the chosen positions but also how these assignments influence the uncertainty of the remaining masked positions. As a result, maximizing $J_{\mathrm{IG}}(a_t; z_t)$ naturally discourages actions that would commit to poorly conditioned or inconsistent states, effectively preventing error propagation during decoding. This behavior manifests as a strong implicit correction mechanism, enabling the sampler to recover from suboptimal early decisions and maintain coherent, high-quality generation. Overall, the Info-Gain objective provides a computationally efficient and robust signal for action selection, balancing immediate certainty with long-term state stability throughout the iterative decoding process.

\begin{figure}[t]
    \centering
    \includegraphics[width=0.5\linewidth]{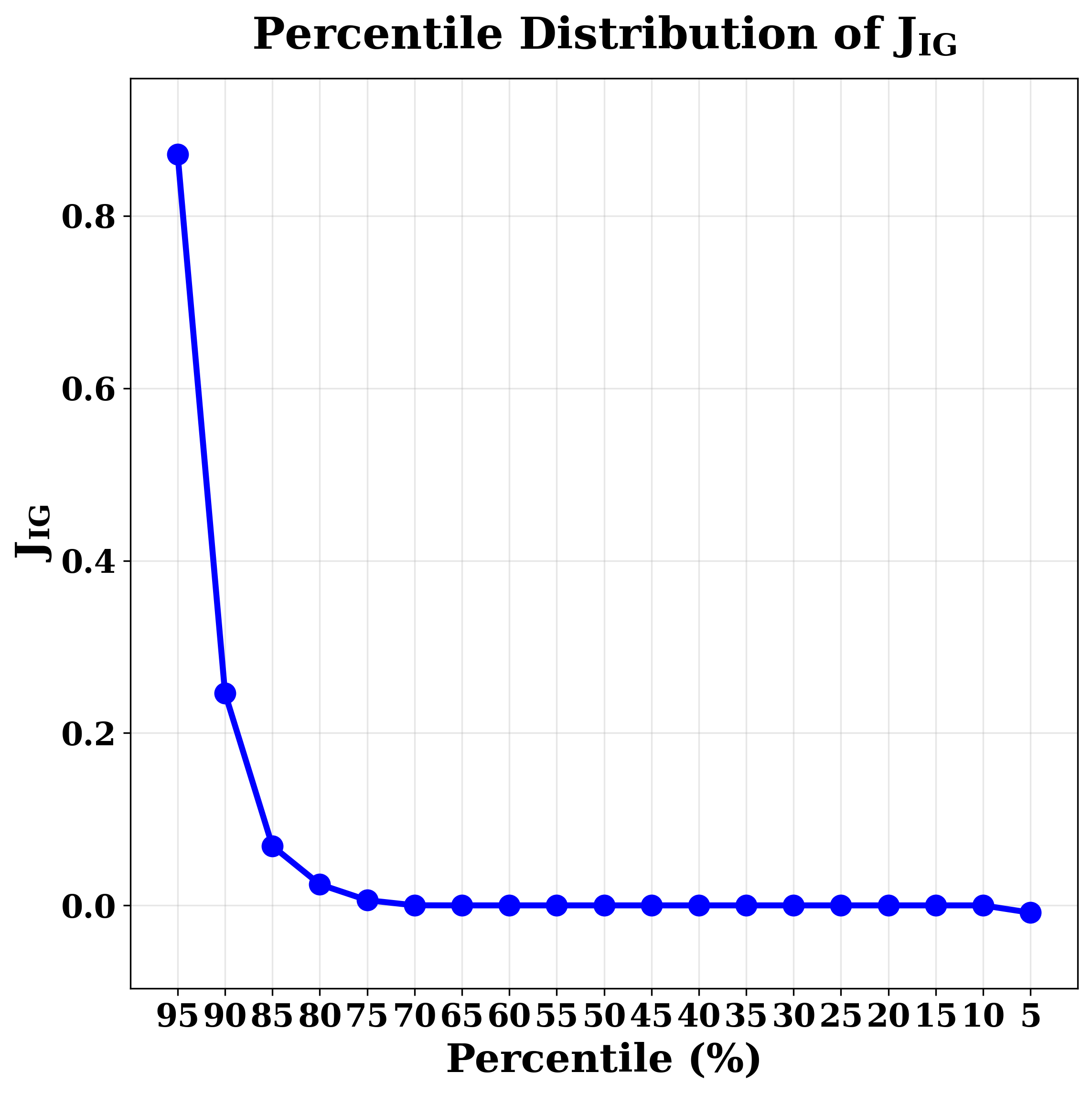}
    \caption{Empirical distribution of $J_{\mathrm{IG}}$ values sorted from highest to lowest. The 5th percentile is $-5 \times 10^{-4}$, indicating that the utility remains close to its theoretical upper bound in practice.}
    \label{fig:jig_distribution}
\end{figure}

\section{Pseudocode for Info-Gain Sampler and Info-Gain Beam Search}
\label{appendix:pseudo-code}
We provide PyTorch-style pseudocode for the implementation of Info-Gain Sampler and Info-Gain Beam Search.

\subsection{Info-Gain Sampler}
\begin{lstlisting}[style=pytorch, caption={Info-Gain Sampler}]
def info_gain_sampler(model, seq_len, num_steps, N):
    # Initialize: all positions masked
    z = torch.full((1, seq_len), MASK_ID)
    
    # Initial forward pass
    with torch.no_grad():
        logits = model(z)  # [1, seq_len, vocab_size]
    
    for t in range(num_steps):
        # 1. Sample candidate actions
        candidates = action_sampler(z, model, N) 
        
        # 2. Parallel Evaluation
        z_candidates = apply(z,candidates)
        logits_candidates = model(z_candidates)
        scores = compute_information_gain(logits,logits_candidates)
        
        # 3. State Transition
        best_idx = torch.argmax(scores)
        z = z_candidates[best_idx:best_idx+1]
        logits = logits_candidates[best_idx:best_idx+1]
    return z
\end{lstlisting}
\subsection{Info-Gain Beam Search}
\begin{lstlisting}[style=pytorch, caption={Info-Gain Beam Search}, label={alg:igp_beam}]
def ig_beam(model, seq_len, num_steps, N, beam_size):
    # Initialize the queue
    beam = [(torch.full((1, seq_len), MASK_ID), 0.0)]
    
    for t in range(num_steps):
        next_beam_candidates = []
        next_f = []
        
        # Expand each beam
        for z, g in beam:
            # 1. Action Sampling
            candidates = action_sampler(z, model, num_candidates=N)
            
            # 2. Update Beam candidates
            z_candidates = apply(z, candidates)
            logits_candidates = model(z_candidates)
            transition_scores = compute_information_gain(candidates)
            state_values = compute_state(logits_candidates)
            
            g_candidates = g + transition_scores
            f_candidates = g_candidates + state_values
            
            for i in range(N):
                next_beam_candidates.append((z_candidates[i:i+1], g_candidates[i]))
                next_f.append(f_candidates[i])
        
        # 3. Selection
        f_tensor = torch.tensor(next_f)
        kept_indices = torch.argsort(f_tensor, descending=True)[:beam_size] 
        beam = [next_beam_candidates[i] for i in kept_indices]
    
    beam.sort(key=lambda x: x[1], reverse=True)
    best_z, best_g = beam[0]
    return best_z
\end{lstlisting}

\section{Experiments in Extremely Low-Step Generation Scenarios}
\label{appendix:few_step}

In this section, we present additional experimental results for scenarios where the number of decoding steps is severely constrained.

First, we present the results for the ImageNet-512 benchmark using the MMaDa model in a text-to-image generation setting. The experimental configuration is similar to that in Section~\ref{sec:exp}, with the key difference being the use of a linear step schedule and a highly constrained budget of only 5 decoding steps. As shown in Figure~\ref{fig:imagenet512_extreme_steps}, our method demonstrates significantly better visual quality and structural coherence under these extreme conditions. We also visualize the evolution of cumulative entropy on the ImageNet-512 benchmark, averaged over 10,000 sampled instances, in Figure~\ref{fig:imagenet512_entropy_evolution}.
\begin{figure*}[h]
    \centering
    \begin{subfigure}{0.4\linewidth}
        \centering
        \includegraphics[width=\linewidth]{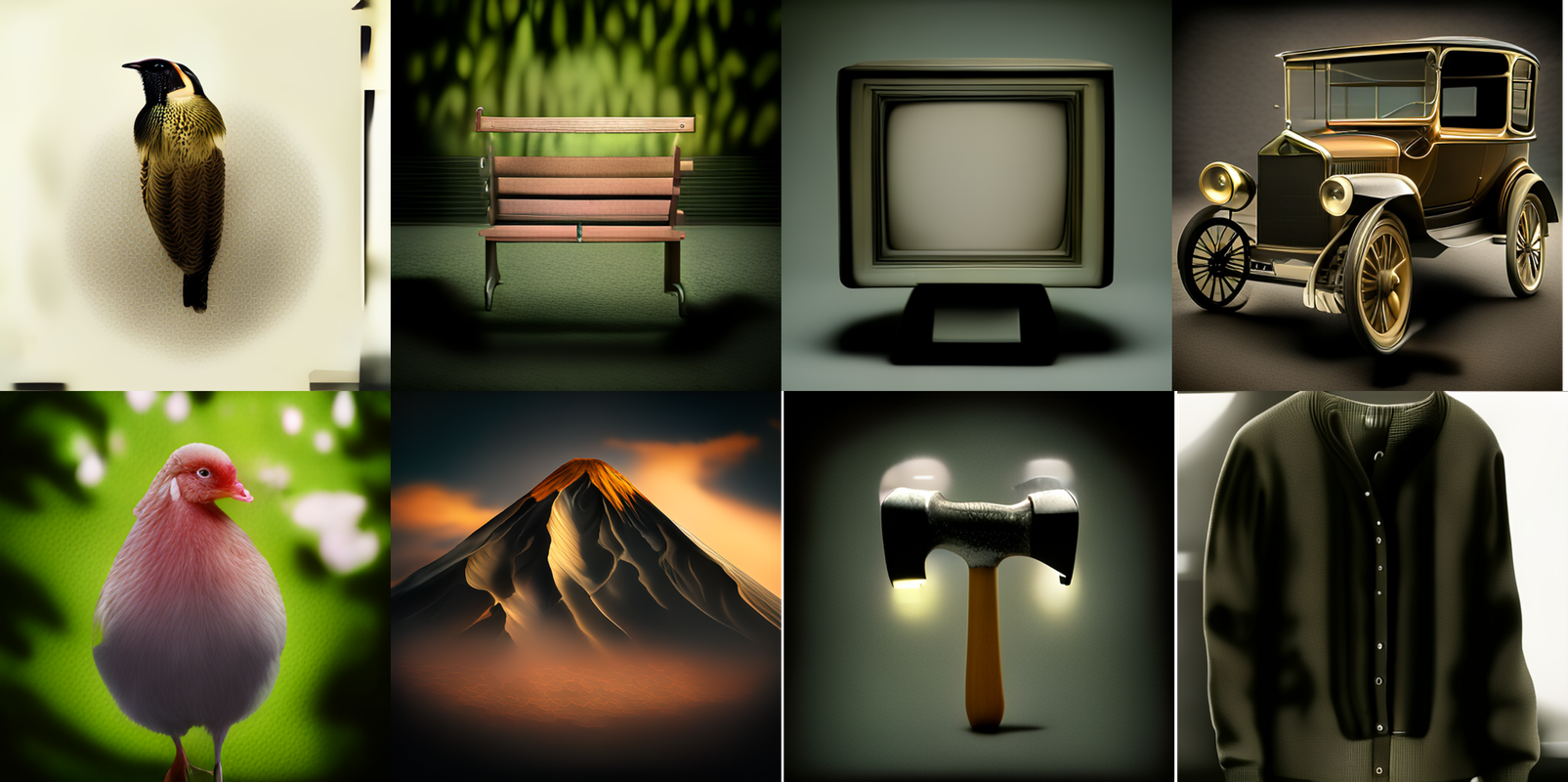}
        \caption{Info-Gain}
    \end{subfigure}
    \hspace{1em}
    \begin{subfigure}{0.4\linewidth}
        \centering
        \includegraphics[width=\linewidth]{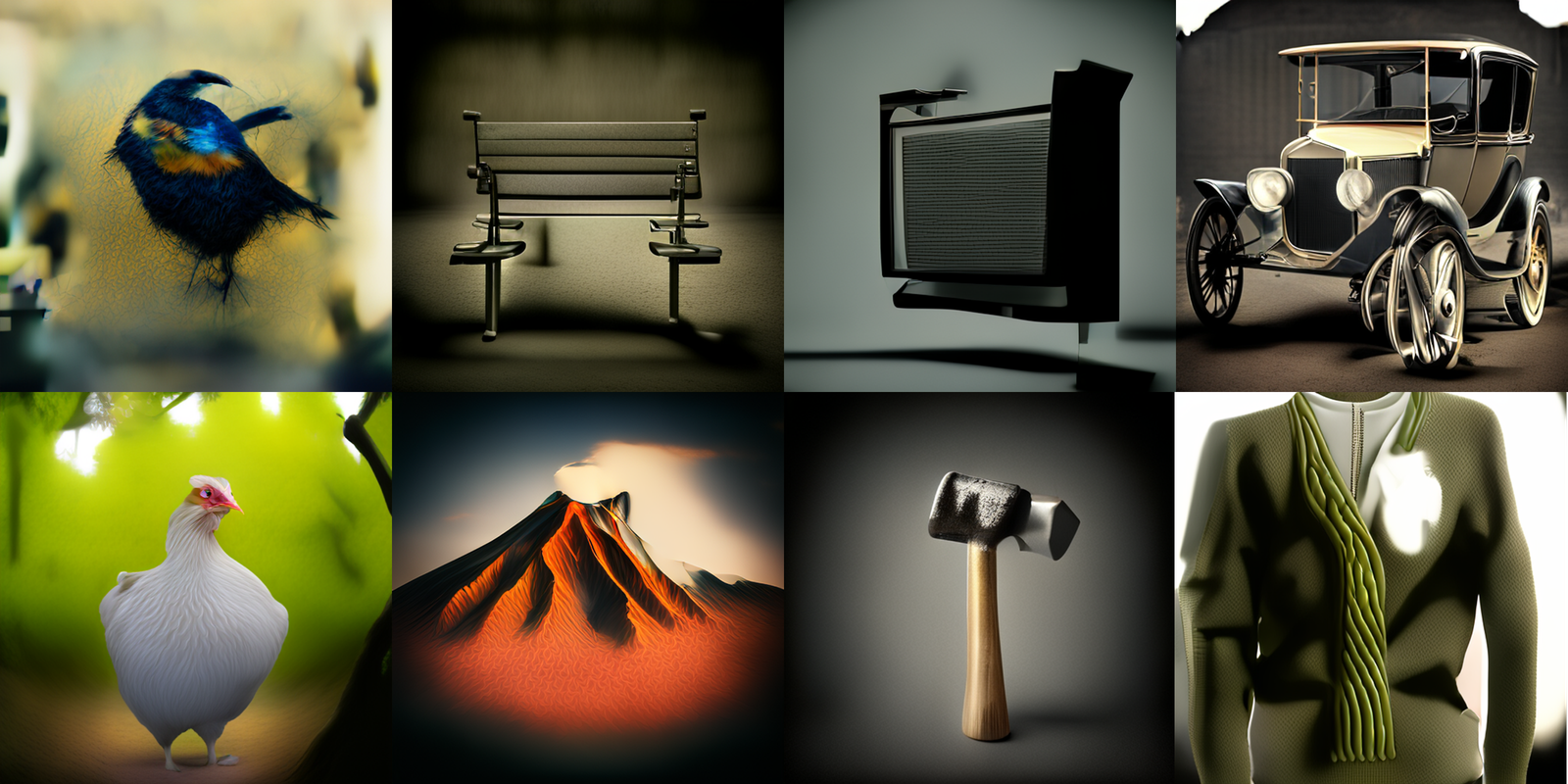}
        \caption{Confidence}
    \end{subfigure}
    
    \vspace{1em}
    
    \begin{subfigure}{0.4\linewidth}
        \centering
        \includegraphics[width=\linewidth]{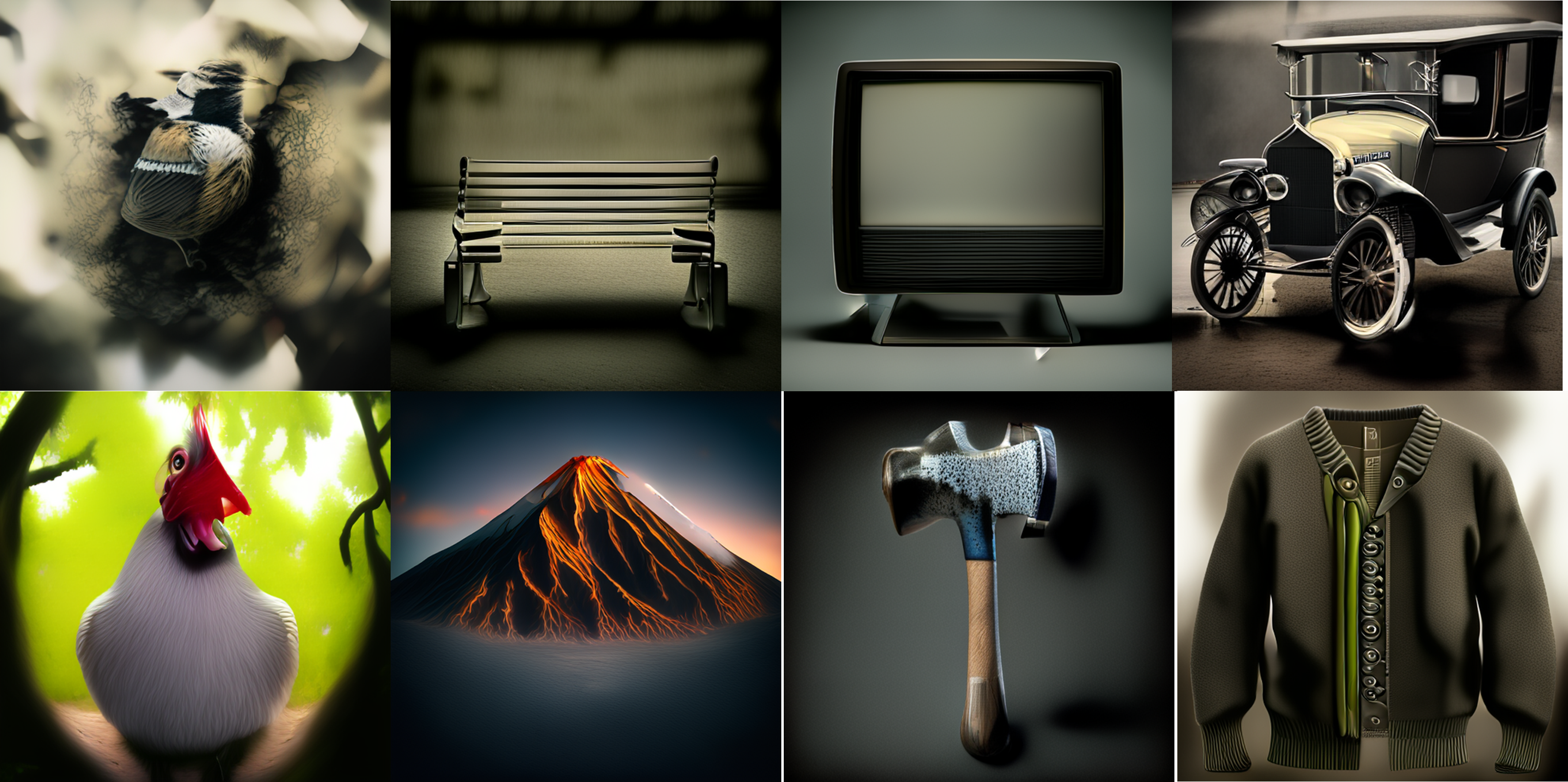}
        \caption{Entropy}
    \end{subfigure}
    \hspace{1em}
    \begin{subfigure}{0.4\linewidth}
        \centering
        \includegraphics[width=\linewidth]{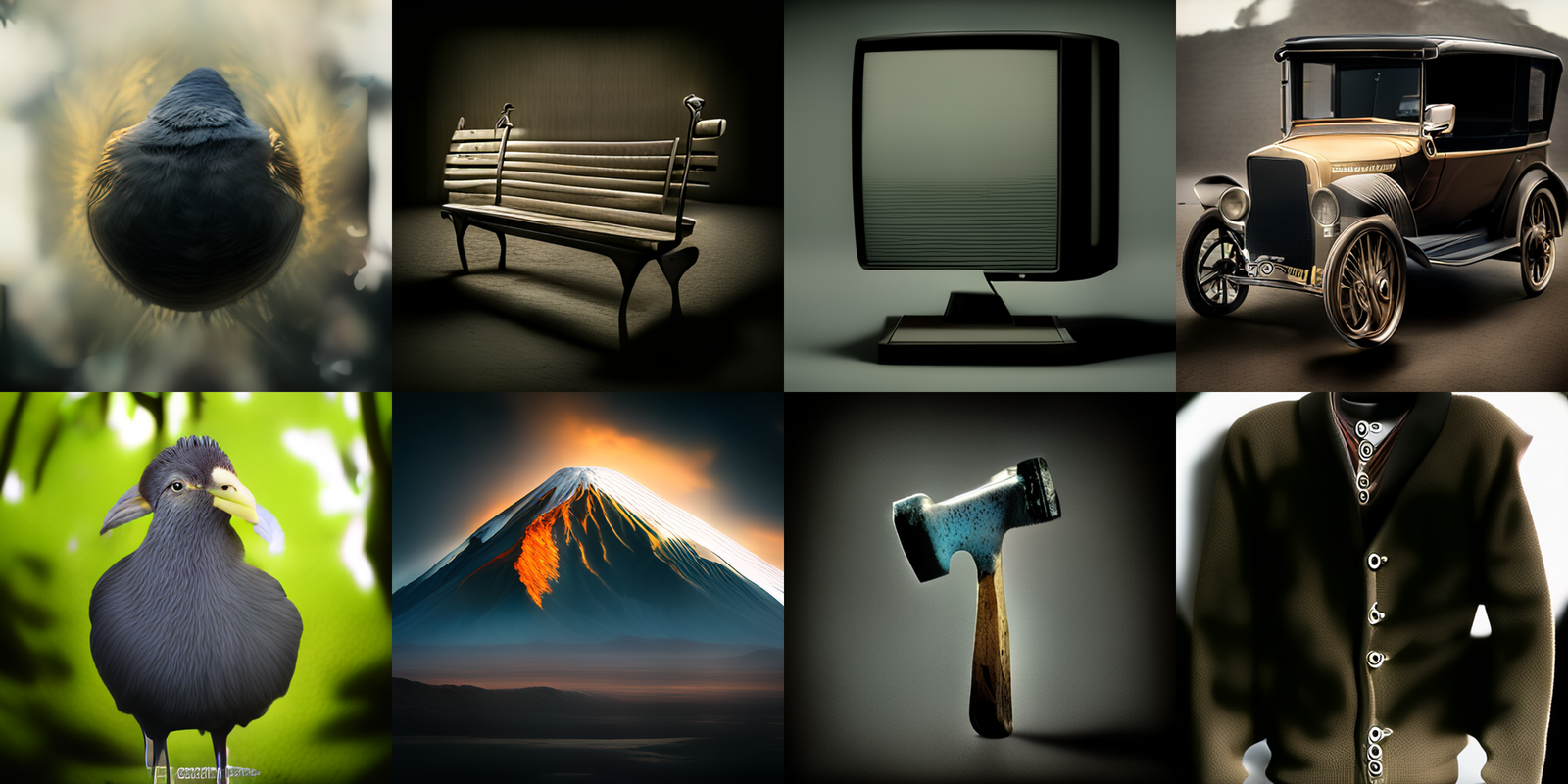}
        \caption{Margin}
    \end{subfigure}
    \caption{Visual results on ImageNet-512 with an extreme budget of only 5 decoding steps. The Info-Gain Sampler maintains superior structural coherence compared to baseline heuristics.}
    \label{fig:imagenet512_extreme_steps}
\end{figure*}

\begin{figure}
    \centering
    \includegraphics[width=0.5\linewidth]{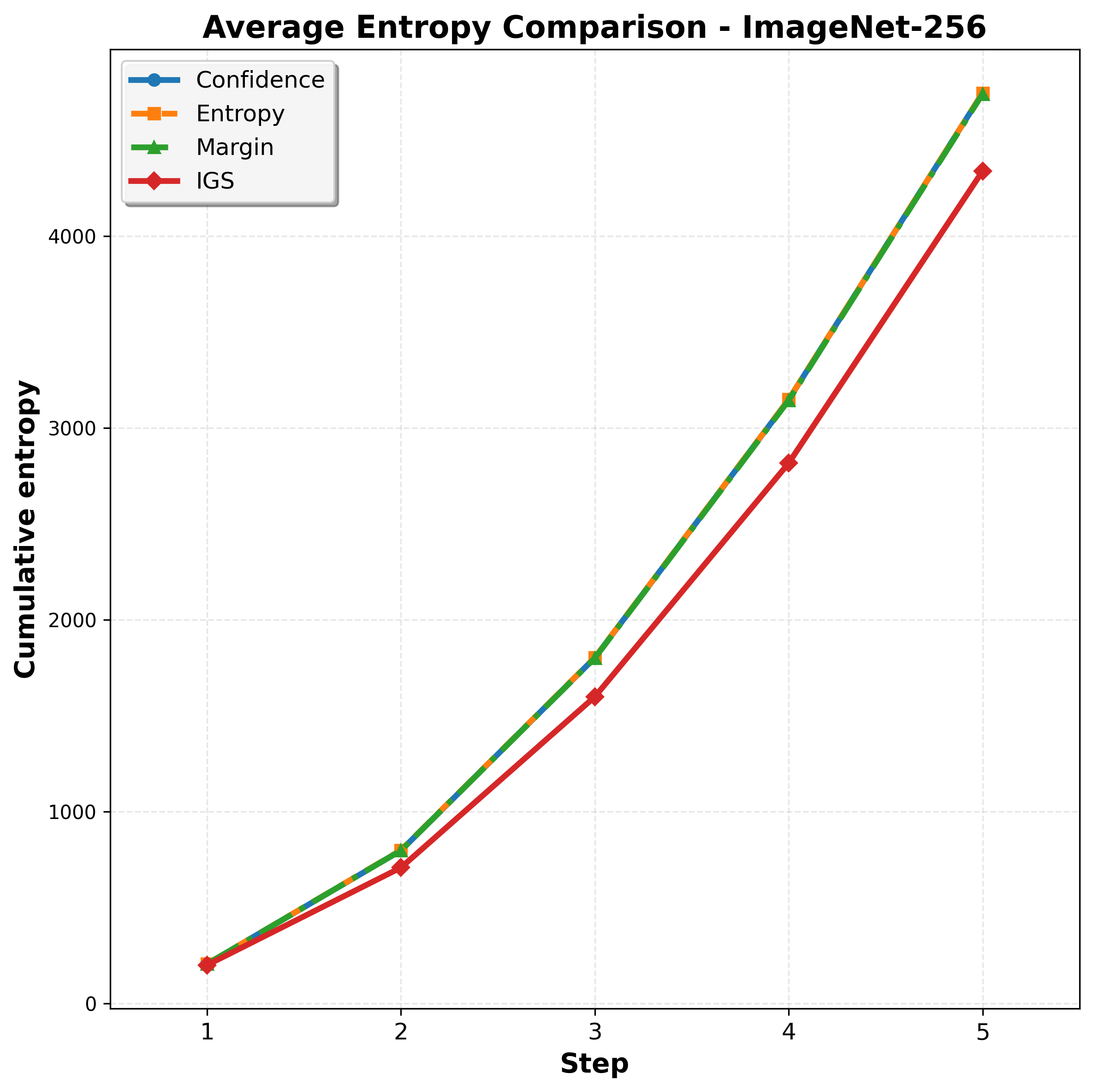}
    \caption{Evolution of cumulative entropy during image generation on the ImageNet-512 benchmark. The results are averaged over all labels using a 5-step linear schedule. The curves illustrate how Info-Gain Sampler manages global uncertainty compared to other methods throughout the decoding process.}
    \label{fig:imagenet512_entropy_evolution}
\end{figure}

To further evaluate the robustness of our sampler, we conduct an ultra-low step generation experiment using the Dream Model. We construct a simple writing task containing 50 prompts (e.g., ``Write a story about a cat'') and require the model to decode the content within a very small number of steps, for a fixed length of 80 tokens. In this extreme regime, most baselines fail to generate meaningful and coherent text. We record the "Collapse Step" for each method, defined as the minimum decoding-step budget at which fewer than 50\% of the generated samples exhibit severe repetitions or grammatical errors.

For all experiments, the token sampling temperature is set to 0.4. For Info-Gain Sampler, the position sampling temperature is 0.4 and the number of candidate actions is $N=8$. The results are summarized in Table~\ref{tab:collapse_steps}.

\begin{table}[ht]
    \centering
    \caption{Comparison of Collapse Steps on ultra-low step writing tasks. Lower values indicate better robustness in extreme acceleration scenarios.}
    \label{tab:collapse_steps}
    \begin{tabular}{lc}
        \toprule
        Sampler & Collapse Step ($\downarrow$) \\
        \midrule
        Entropy & 13 \\
        Confidence & 12 \\
        Margin & 12 \\
        \textbf{Info-Gain} & \textbf{8} \\
        \bottomrule
    \end{tabular}
\end{table}

These ultra-low-step experiments evaluate how to generate meaningful content under conditions of extremely limited information and few decoding steps. The performance of the greedy sampler is significantly inferior to that of the Info-Gain Sampler. This demonstrates that by balancing the utilization of immediate information with the information gain for future decisions, the Info-Gain Sampler produces more coherent generations within a restricted number of decoding steps.

\section{Resource Overhead Analysis}
\subsection{Memory Overhead Analysis}
\label{appendix:memory}

Let $M_W$ denote model weights, $M_A$ activations, and $M_C(L_{\mathrm{ctx}})$ the memory of one dual-cache state for the cached context outside the current active block. Following Fast-dLLM~\citep{wu2025fast}, this cached context includes both the prefix tokens before the active block and the suffix tokens after the active block; suffix positions that are not decoded in the current block remain masked but unchanged, so their KV states can also be reused. We use $N$ for the number of candidate actions evaluated per state and $B$ for the beam size. The key difference lies in whether candidate evaluation shares one outside-block dual cache or maintains multiple divergent caches:

\begin{enumerate}
    \item \textbf{Non-cached Inference:} $M_{\text{total}} = M_W + N \cdot M_A$
    \item \textbf{Info-Gain Sampler (Shared Dual Cache):} $M_{\text{total}} = M_W + M_C(L_{\mathrm{ctx}}) + N \cdot M_A$
    \item \textbf{Info-Gain Beam Search (Divergent Dual Caches):} $M_{\text{total}} = M_W + B \cdot M_C(L_{\mathrm{ctx}}) + B \cdot N \cdot M_A$
\end{enumerate}

For Info-Gain Sampler, all $N$ candidates modify only the active block while sharing the same outside-block dual cache. The additional memory cost for increasing $N$ is therefore dominated by transient activations $M_A$, which are relatively small compared to weights $M_W$. In contrast, Info-Gain Beam Search must maintain $B$ separate dual caches for divergent trajectories and evaluate $N$ candidate actions for each beam element, leading to memory that scales with both $B$ and $N$. This is shown in Table~\ref{tab:memory_comparison_summary}.

\begin{table}[h]
\centering
\caption{Memory efficiency comparison under shared dual-cache reuse.}
\label{tab:memory_comparison_summary}
\resizebox{\columnwidth}{!}{
\begin{tabular}{lcc}
\toprule
\textbf{Method} & \textbf{Dual-Cache Count} & \textbf{Memory Scaling} \\ \midrule
Info-Gain Sampler & 1 shared outside-block cache & $O(M_W + M_C + N \cdot M_A)$ \\
Info-Gain Beam Search & $B$ divergent outside-block caches & $O(M_W + B \cdot M_C + B \cdot N \cdot M_A)$ \\ \bottomrule
\end{tabular}}
\end{table}

Empirical measurements on TraDo-8B-Instruct (512 tokens, block size 16) validate this analysis. As shown in Figure~\ref{fig:gpu-compare}, Info-Gain Sampler maintains stable memory usage with only 24\% overhead at $N=8$, while Info-Gain Beam Search exhibits substantial memory surge due to the $B$ divergent dual caches and the $B \cdot N$ candidate evaluations.
\subsection{Computation Overhead Analysis}
\label{appendix:computation}

The computational overhead of Info-Gain Sampler stems primarily from evaluating $N$ candidates per step. If each candidate were evaluated independently, the nominal time complexity would be:
\begin{equation}
T_{\text{theoretical}} = K \cdot N \cdot T_f
\label{eq:complexity}
\end{equation}
where $K$ is the number of decoding steps, $N$ is candidates per step, and $T_f$ is one forward pass cost.

\textbf{Practical optimizations significantly reduce overhead:}
\begin{enumerate}
    \item \textbf{Batched Parallel Evaluation:} All $N$ candidates are processed simultaneously:
    \begin{equation}
    T_{\text{batch}} = K \cdot (T_f + \epsilon), \quad \epsilon \ll (N-1)T_f
    \end{equation}
    
    \item \textbf{Dual-Cache Reuse:} Reusing the shared outside-block dual cache reduces attention computation:
    \begin{equation}
    T_{\text{cached}} \approx K \cdot T_{\text{dual-cache}} + \frac{K(N-1)}{N} T_{\text{attn}}
    \end{equation}
    
    \item \textbf{High-confidence Bypass:} If the maximum token probability exceeds a threshold $\gamma$, the full sampling routine is skipped to reduce latency.
\end{enumerate}

% \begin{table}[h]
% \centering
% \small
% \caption{Average time comparison (seconds) on reasoning tasks across different models and settings.}
% \label{tab:time_comparison}
% \begin{tabular}{@{}llcc@{}}
% \toprule
% \textbf{Model} & \textbf{Setting} & \textbf{Baselines} & \textbf{Info-Gain} \\
% \midrule
% Dream-7B & $K=2$ ($N=8$) & 9.7--13.3 & 18.4 (+64$\pm$26\%) \\
% & $K=1$ ($N=8$) & 20.6--24.0 & 25.5 (+15$\pm$9\%) \\
% \midrule
% SDAR-8B & $K=1$ ($N=8$) & 8.7--11.1 & 14.1 (+45$\pm$18\%) \\
% TraDo-8B & $K=1$ ($N=8$) & 12.0--12.4 & 17.3 (+42$\pm$2\%) \\
% \bottomrule
% \end{tabular}
% \end{table}

As shown in Fig.~\ref{fig:threshold_tradeoff}, our adaptive threshold mechanism effectively manages computational cost while maintaining high decoding quality. By dynamically adjusting the candidate acceptance threshold, we achieve near-optimal entropy reduction with minimal additional time. This explains why the practical overhead is much smaller than the naive $N\times$ estimate. While evaluating more candidates ($N$) increases per-step cost, it can reduce unnecessary full evaluations through batching, cache reuse, and high-confidence bypass. Our batched implementation and threshold optimization ensure that the quality improvements are achieved with only a modest increase in overhead.

\begin{figure*}[h]
    \centering
    \includegraphics[width=0.9\linewidth]{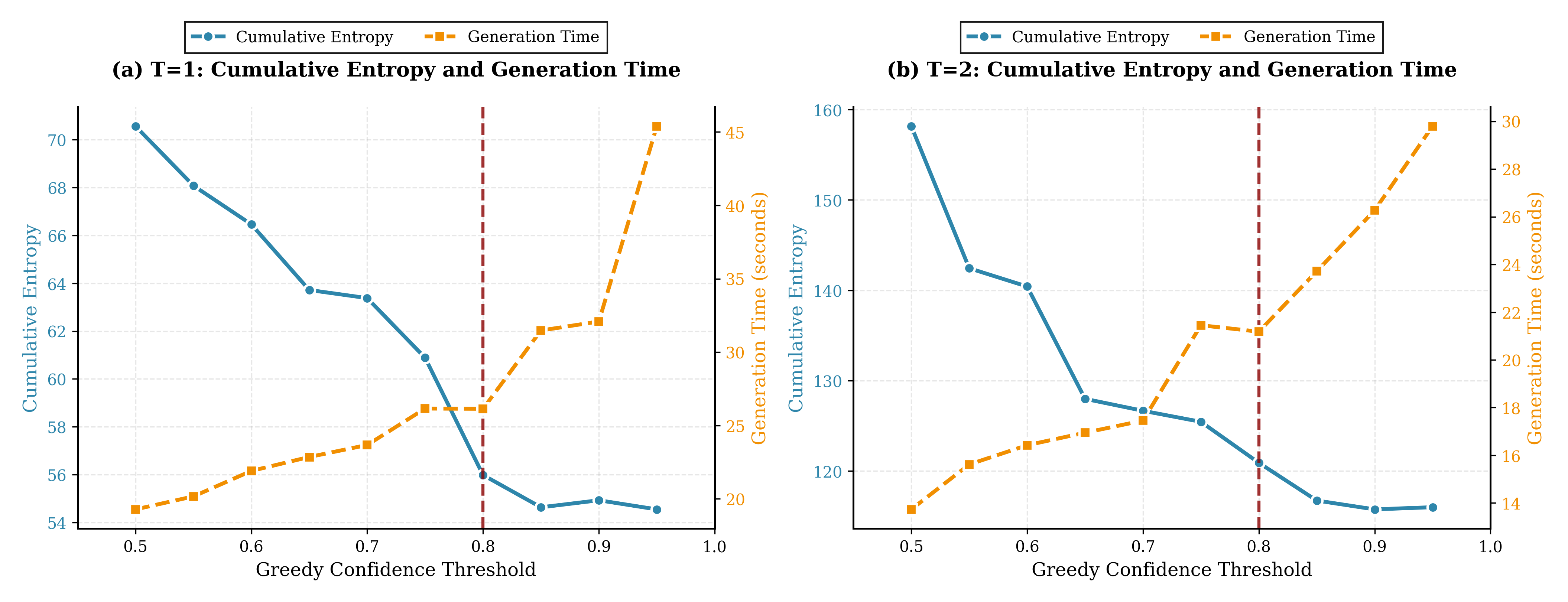}
    \caption{Effect of utility threshold on cumulative entropy reduction and generation time. }
    \label{fig:threshold_tradeoff}
\end{figure*}

\begin{figure*}[h]
    \centering
    \includegraphics[width=0.8\linewidth]{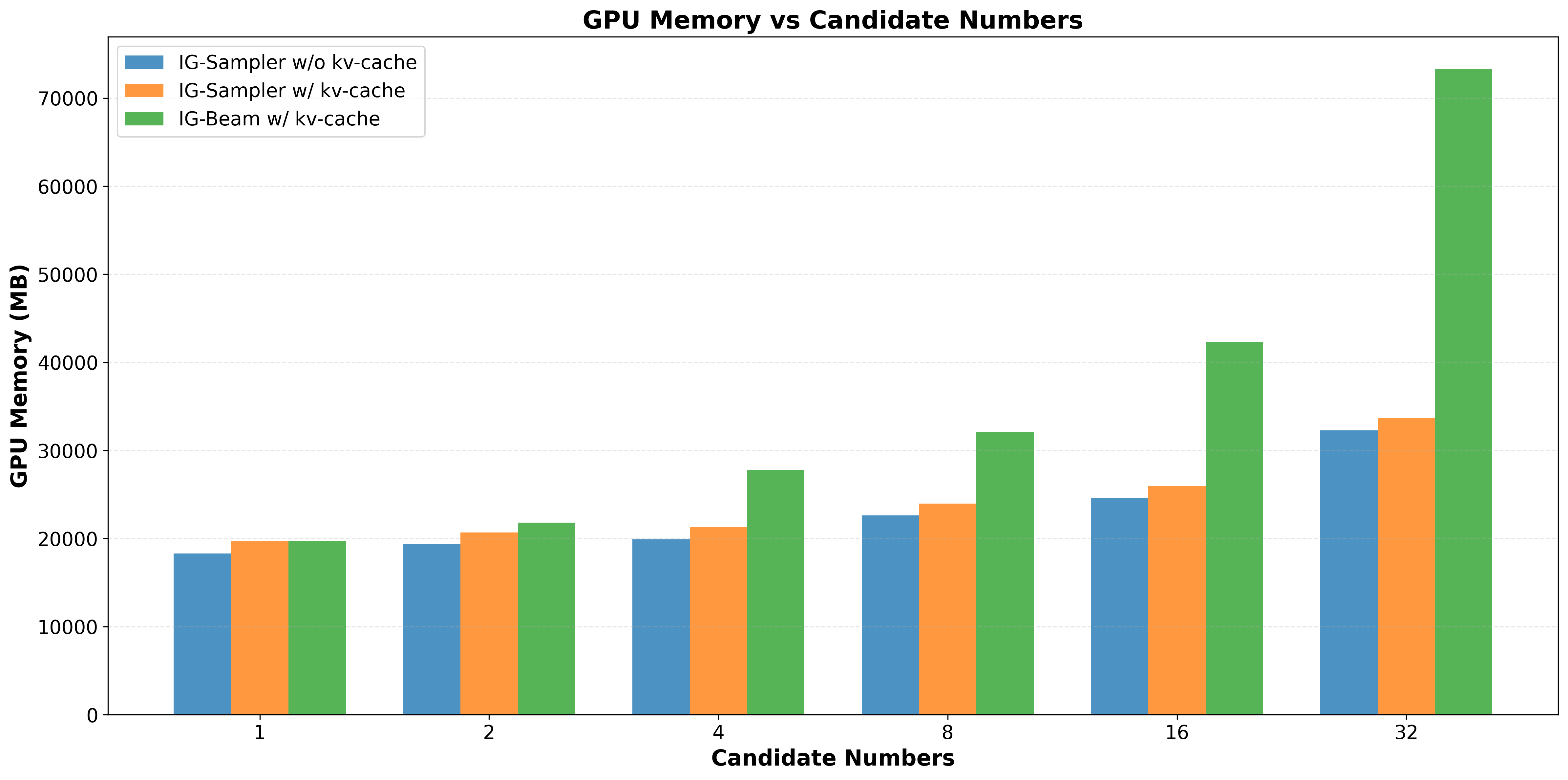}
    \caption{Memory usage comparison. Info-Gain Sampler maintains low overhead via shared dual-cache reuse.}
    \label{fig:gpu-compare}
\end{figure*}

\end{onecolumn}
\end{document}